\def\Figref#1{Figure~\ref{#1}}
\def\Secref#1{Section~\ref{#1}}
\def\eqref#1{equation~\ref{#1}}
\def\Eqref#1{Eq.~(\ref{#1})}
\def\Algref#1{Algorithm~\ref{#1}}
\def\1{\bm{1}}
\def\va{{\bm{a}}}
\def\vh{{\bm{h}}}
\def\vo{{\bm{o}}}
\def\vs{{\bm{s}}}
\def\vz{{\bm{z}}}
\DeclareMathAlphabet{\mathsfit}{\encodingdefault}{\sfdefault}{m}{sl}
\SetMathAlphabet{\mathsfit}{bold}{\encodingdefault}{\sfdefault}{bx}{n}
\def\gB{{\mathcal{B}}}
\def\gD{{\mathcal{D}}}
\def\gH{{\mathcal{H}}}
\def\gI{{\mathcal{I}}}
\def\gL{{\mathcal{L}}}
\def\gM{{\mathcal{M}}}
\def\gO{{\mathcal{O}}}
\def\gP{{\mathcal{P}}}
\def\gS{{\mathcal{S}}}
\def\sR{{\mathbb{R}}}
\def\sS{{\mathbb{S}}}
\newcommand{\E}{\mathbb{E}}
\newenvironment{myitemize}{\begin{list}{$\bullet$}
{\setlength{\topsep}{1mm}
\setlength{\itemsep}{0.25mm}
\setlength{\parsep}{0.25mm}
\setlength{\itemindent}{0mm}
\setlength{\partopsep}{0mm}
\setlength{\labelwidth}{15mm}
\setlength{\leftmargin}{4mm}}}{\end{list}}
\newtheorem{theorem}{Theorem}
\newtheorem{appendix_thm}{Theorem}
\numberwithin{appendix_thm}{section} 
\theoremstyle{definition}
\newtheorem{definition}{Definition}
\newtheorem{proposition}{Proposition}
\numberwithin{proposition}{section} 
\newtheorem{corollary}{Corollary}
\numberwithin{corollary}{section}
\icmltitlerunning{DRIBO: Robust Deep Reinforcement Learning via Multi-View Information Bottleneck}
\newcommand{\methodname}{DRIBO\xspace}
\begin{document}

\twocolumn[
\icmltitle{DRIBO: Robust Deep Reinforcement Learning via \\ Multi-View Information Bottleneck}



\icmlsetsymbol{equal}{*}

\begin{icmlauthorlist}
\icmlauthor{Jiameng Fan}{bu}
\icmlauthor{Wenchao Li}{bu}
\end{icmlauthorlist}

\icmlaffiliation{bu}{Department of Electrical and Computer Engineer, Boston University, Boston, Massachusetts, USA}

\icmlcorrespondingauthor{Jiameng Fan}{jmfanbu@bu.edu}

\icmlkeywords{Machine Learning, ICML}

\vskip 0.3in
]



\printAffiliationsAndNotice{} 
\begin{abstract}
Deep reinforcement learning (DRL) agents are often sensitive to visual changes that were unseen in their training environments.
To address this problem,
we leverage the sequential nature of RL to learn robust representations that encode only task-relevant information from observations based on the unsupervised multi-view setting.
Specifically, we introduce a novel contrastive version of the Multi-View Information Bottleneck (MIB) objective for temporal data. We train RL agents from pixels with this auxiliary objective to learn robust representations that can compress away task-irrelevant information and are predictive of task-relevant dynamics.
This approach enables us to train high-performance policies that are robust to visual distractions and can generalize well to unseen environments.
We demonstrate that our approach can achieve SOTA performance on a diverse set of visual control tasks in the DeepMind Control Suite when the background is replaced with natural videos.
In addition, we show that our approach outperforms well-established baselines for generalization to unseen environments on the Procgen benchmark.
Our code is open-sourced and available
at \url{https://github.com/BU-DEPEND-Lab/DRIBO}.
\end{abstract}

\section{Introduction}
\label{sec:introduction}
Deep reinforcement learning (DRL)has been shown to be successful in learning high-quality controllers directly from raw images in an end-to-end fashion~\citep{mnih2015human, levine2016end,  bojarski2016end}.
However,
it has been observed that DRL agents perform poorly in environments different from those where the agents were trained on, even when these environments contain semantically equivalent information relevant to the control task~\citep{farebrother2018generalization, cobbe2019quantifying, zhang2018study, zhang2018natural, yu2018policy}.
By contrast, humans routinely adapt to new, unseen environments.
For example, while visual scenes can be drastically different when driving in different cities, human drivers can quickly adjust to driving in a new city which they have never visited.
We argue that this ability to adapt stems from the fact that driving skills are invariant to many visual details that are actually not relevant to driving.
Conversely, DRL agents without this ability are hindered from understanding the temporal structure of \textit{task-relevant} dynamics. 
As a result, they can be easily distracted
by \textit{task-irrelevant} visual details~\citep{jonschkowski2015learning, zhang2021learning, agarwal2021contrastive,lee2020predictive}
.

Viewing from a representation-learning perspective, a desired latent state representation for RL should facilitate the prediction of future states, 
beyond expected rewards,
on potential actions~\citep{lee2020predictive,agarwal2021contrastive,mazoure2020deep} and exclude excessive, task-irrelevant information in the visual observations~\citep{zhang2021learning,fu2021learning}.
An RL agent that learns from such representations has the advantage of being more \textit{robust to visual distractions}.
The resulting policy is also more likely to \textit{generalize to unseen environments}
if the task-relevant information in the new environment remains similar to that in the training environments.
Prior works~\citep{hafner2019learning, lee2020stochastic} 
rely on a reconstruction loss to learn latent representations and dynamics jointly in a latent space.
While these approaches learn representations that retain information in the visual observations, they do nothing to discard the irrelevant information~\citep{zhang2021learning,tomar2021learning}.

\begin{figure*}[t!]
    \centering
    \includegraphics[width=\textwidth]{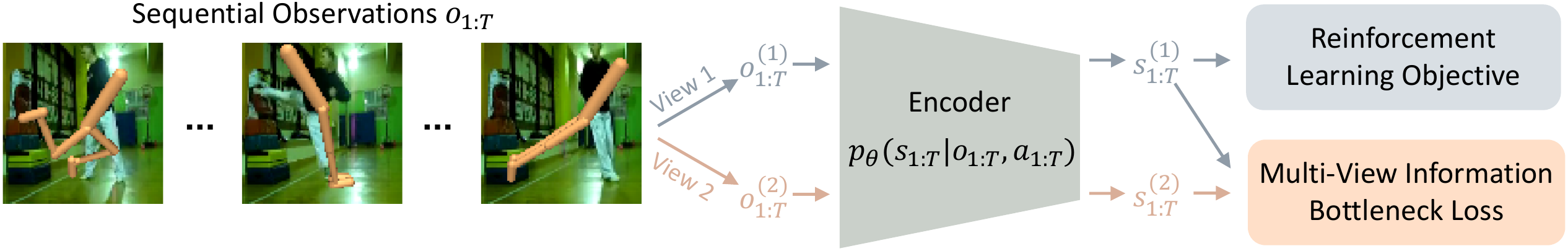}
    \caption{
    Robust \textbf{D}eep \textbf{R}einforcement Learning via Multi-View \textbf{I}nfomration \textbf{BO}ttleneck (\methodname)
    learns robust representations for RL agents by 
    incorporating temporal information in the RL problems and multi-view observations. 
    We consider sequential multi-view observations (generated by stochastic data augmentations), $\vo_{1:T}^{(1)}$ and $\vo_{1:T}^{(2)}$, of original observations $\vo_{1:T}$ to share the same \textit{task-relevant} information. Thus,  any information not shared by them are considered \textit{task-irrelevant}. For example, the natural videos playing in the background 
    are irrelevant to the foreground task in this figure.
    \methodname uses a multi-view information bottleneck loss to 
    maximize the task-relevant information shared between representations of the multi-view observations $\vs_{1:T}^{(1)}$ and $\vs_{1:T}^{(2)}$, 
    while reducing
    the task-irrelevant information encoded in the representations.
    }
    \label{fig:framework}
\end{figure*}

We tackle this problem by considering latent state representations for RL that are robust
under the
\textit{multi-view setting}~\citep{li2018survey,federici2020learning, fischer2020conditional},
where each view is assumed to provide the same amount of \textit{task-relevant} information while all the information not shared by them is deemed \textit{task-irrelevant}.
Data augmentation can be easily leveraged to generate such multi-view observations without requiring additional new data. 
Data augmentation in
RL has delivered promising results for visual control tasks~\citep{laskin2020reinforcement,lange2012autonomous,laskin2020curl}.
However, these methods rarely exploit
the sequential aspect of RL~\citep{agarwal2021contrastive,mazoure2020deep} which requires an ideal state representation to be predictive of future states given actions, i.e. 
given a sequence of actions from the current (latent) state, the predicted future states according to the evolution of the latent model should match, in a distributional sense, the latent mappings obtained from the subsequent observations. 
In fact, the sequential nature of RL provides an additional temporal dimension for identifying \textit{task-irrelevant} information especially when such information is independent of the actions taken by the agent.
Instead of learning representations from individual visual observations~\citep{laskin2020curl},
we propose to learn a predictive model that captures the temporal evolution of representations from a sequence of observations and actions.
Concretely,
\textit{\textbf{we introduce a new multi-view information bottleneck (MIB) objective that maximizes the mutual information between sequences of observations and sequences of representations while reducing the task-irrelevant information identified through the multi-view observations.}}
We incorporate this MIB objective into RL by using it as an auxiliary learning objective. 
%
We illustrate our
approach in~\Figref{fig:framework}. Our contributions are summarized below.
\begin{myitemize}
    \item We propose \methodname, a novel technique to learn robust representations in RL by compressing away \textit{task-irrelevant} information in the representations based on MIB.
    \item We leverage the sequential nature of RL to
    learn representations better suited for RL with
    a non-reconstruction-based, \methodname loss that maximizes the mutual information between sequences of observations and representations while disregarding \textit{task-irrelevant} information.
    \item Empirically, we show that our approach can (i) lead to better robustness against task-irrelevant distractors on the DeepMind Control Suite and (ii) significantly improve generalization on the Procgen benchmarks compared to current state-of-the-arts.
\end{myitemize}

\section{Related Work}
\label{sec:related_work}

\textbf{Reconstruction-based Representation Learning.} Early works first trained autoencoders to learn representations to reconstruct raw observations. Then, the RL agent was trained from the learned representations~\citep{lange2010deep, lange2012autonomous}. However, there is no guarantee that the agent will capture useful information for control.
To address this problem, learning encoder and dynamics jointly has been proved effective in learning task-oriented and predictive representations~\citep{wahlstrom2015pixels, watter2015embed}. More recently, \citet{hafner2019learning, hafner2019dream, hafner2020mastering} and \citet{lee2020stochastic} learn a latent dynamics model and train RL agents with predictive latent representations. However, these approaches suffer from the problem of embedding all details into representations even when they are task-irrelevant.
The reason is that improving reconstruction quality from representations to visual observations forces the representations to retain more details.
Despite success on many benchmarks, task-irrelevant visual changes can affect performance significantly~\citep{zhang2018natural}. Experimentally, we show that our non-reconstructive approach, \methodname, is substantially more robust against visual changes than prior works. We also compare \methodname with another non-reconstructive method, DBC~\citep{zhang2021learning}, which uses bisimulation metrics to learn representations in RL that contain only task-relevant information.

\textbf{Contrastive Representations Learning.}
Contrastive representation learning methods train an encoder that obeys similarity constraints in a dataset typically organized by similar and dissimilar pairs. The similar examples are typically obtained from nearby image patches~\citep{oord2018representation, henaff2020data} or through data augmentation~\citep{chen2020simple}.
A scoring function that lower-bounds mutual information is one of the typical objects to be maximized~\citep{pmlr-v80-belghazi18a,oord2018representation, hjelm2019learning, poole2019variational}.
A number of works have applied the above ideas to RL settings to extract predictive signals. EMI~\citep{kim2019emi} applies a Jensen-Shannon divergence-based lower bound on mutual information across subsequent frames as an exploration bonus.
DRIML~\citep{mazoure2020deep} uses an auxiliary contrastive objective to maximize concordance between representations to increase predictive properties of the representations conditioned on actions.
CURL~\citep{laskin2020curl}
incorporates contrastive learning into RL algorithms to maximize similarity between augmented versions
of the same observation. 
However, solely maximizing the lower-bound of mutual information retains all the information including those that are task-irrelevant~\citep{federici2020learning,fischer2020conditional}.

\textbf{Multi-View Information Bottleneck (MIB).}
MVRL~\citep{li2019multi} uses the multi-view setting to
tackle
partially observable Markov decision processes with more than one observation model.
For classification tasks,
\citet{federici2020learning} uses MIB by maximizing the mutual information between the representations of the two views while at the same time eliminating the label-irrelevant information identified by multi-view observations. \citet{fischer2020conditional} describes a variant of the Conditional Entropy Bottleneck (CEB) which is mathematically equivalent to MIB.
However, MIB/CEB cannot be directly used in RL
settings
due to the sequential nature of decision making problems.
PI-SAC~\citep{lee2020predictive} uses a contrastive version of CEB to model Predictive Information~\citep{bialek1999predictive} which is the mutual information between the past and the future to solve RL problems. However, this approach does not scale to long sequential data in RL and in practice only models short-term Predictive Information.
Task-relevant information in RL is relevant because they influence not only current control decision and reward but also states and rewards well into the future.
Our work, \methodname, learns robust representations with a predictive model to maximize the mutual information between sequences of representations and observations, while eliminating task-irrelevant information based on the information bottleneck principle. Learning a predictive model also adopts richer learning signals than those provided by individual observation and reward alone. Philosophically and technically, our approach is different from PI-SAC which does not quantify task-irrelevant information from multi-view observations and cannot capture long-term dependencies.
Another line of work, IDAAC~\citep{raileanu2021decouple}, leverages an adversarial framework so that the learned representations yield features that are instance-independent and invariant to task-irrelevant changes.

\section{Preliminaries}
\label{sec:preliminaries}
We denote a Markov decision process (\textbf{MDP}) as $\gM$, with state $\vs$, action $\va$, and reward $r$. $S$ and $A$ stand for the corresponding random variables. We denote a policy on $\gM$ as $\pi$. The agent's goal is to learn a policy $\pi$ that maximizes the cumulative rewards.
We define $\gS {\subseteq} \sR^d$ as the state-representation space. The visual observations are $o {\in} \gO$, where we denote multi-view observations from the viewpoint $i$ as $o^{(i)}$. $O$ stands for the random variable of the observation.
We introduce a multi-view trajectory $\tau^{\gM} {=} [\vs_1, \vo_1^{(i)}, \va_1, \dots, \vs_T, \vo_T^{(i)}, \va_T]$ where $T$ is the length. Knowing that the trajectory density is defined over joint observations, states, and actions, we write:
\begin{align}
    p_{\pi}(\tau^{\gM}) &{=} \pi(\va_T | \vs_T) \gP_{\text{obs}}^{(i)}(\vo_T^{(i)}|\vs_T) \gP(\vs_T | \vs_{T - 1}, \va_{T - 1}) \nonumber \\ &\cdots \pi(\va_{1} | \vs_{1}) \gP_{\text{obs}}^{(i)}(\vo_1^{(i)} | \vs_1) \gP_0(\vs_1) 
    \label{eq: multi-view sequences}
\end{align}
with $\gP_0$ being the initial state distribution, $\gP$ being the transition model and $\gP_{\text{obs}}^{(i)}$ being the unknown observation model for view $i$.
DRL agents learn from visual observations by treating consecutive observations as states to implicitly capture the predictive property.
However, rich details in observations can easily distract the agent.
An ideal representation should contain no task-irrelevant information and satisfy some underlying MDP which determines the distribution of the multi-view trajectory in~\Eqref{eq: multi-view sequences}.
Thus, instead of mapping a single-step observation to a representation,
we consider learning a predictive model that correlates sequential observations and representations.

Let $O_{1:T}$ and $A^*_{1:T}$ be random variables with joint distribution $p(\vo_{1:T}, \va^*_{1:T})$ where $O_{1:T}$ is obtained by executing the action sequence $A_{1:T}$. Let $A^*_{1:T}$ be the \textit{optimal} action sequence for $O_{1:T}$.
We first consider \textit{sufficient} representations that are discriminative enough to obtain $A^*$ at each timestep.
This property can be quantified by the amount of mutual information between $O_{1:T}$ and $A^*_{1:T}$ and mutual information between $S_{1:T}$ and $A^*_{1:T}$.
\begin{definition}
    Representations $S_{1:T}$ of $O_{1:T}$ are \textit{sufficient} for RL iff $\gI(O_{1:T}; A^*_{1:T}) {=} \gI(S_{1:T}; A^*_{1:T})$.
    \label{def:sufficiency}
\end{definition}
RL agents that have access to a sufficient representation $S_t$ at timestep $t$ must be able to generate $A^*_t$ as if it has access to the original observations.
This can be better understood by subdividing $\gI(O_{1:T}; S_{1:T})$ into two components using the chain rule of mutual information:
\begin{align}
    \gI(O_{1:T}; S_{1:T}) {=} \gI( S_{1:T}; O_{1:T} | A^*_{1:T}) {+} \gI(S_{1:T}; A^*_{1:T})
    \label{eq:superflous}
\end{align}
Conditional mutual information $\gI(S_{1:T}; O_{1:T} | A^*_{1:T})$ quantifies the information in $S_{1:T}$ that is \textit{task-irrelevant}. $\gI(S_{1:T}; A^*_{1:T})$ quantifies \textit{task-relevant} information that is accessible from $S_{1:T}$ to derive $A^*_{1:T}$.
The last term is independent of the representation as long as $S_t$ is sufficient for $A^*_t$ (see Definition~\ref{def:sufficiency}).
Thus, a representation contains minimal task-irrelevant information whenever $\gI(O_{1:T}; S_{1:T} | A^*_{1:T})$ is minimized.
Maximizing
$\gI(O_{1:T}; S_{1:T})$ learns a sufficient representation.
With the information bottleneck principle~\citep{tishby2000information}, we can construct an objective to maximize $\gI(O_{1:T}; S_{1:T})$ while minimizing $\gI( S_{1:T}; O_{1:T} | A^*_{1:T})$ to compress away task-irrelevant information.

However, estimating the mutual information between long sequences is difficult due to the high dimensionality of the problem.
In addition,
the minimization of $\gI(S_{1:T}; O_{1:T}  | A^*_{1:T})$ can only be done directly in supervised settings where $A^*_{1:T}$ are observed.
One option is to use MIB which can compress away task-irrelevant information in the representations in unsupervised settings~\citep{federici2020learning}. The problem, however, is that MIB in its original form only considers a single observation and
thus does not guarantee that the learned representations retain the important temporal structure of RL.
In the next section, we describe how we extend MIB to RL settings.


\section{\methodname}
\label{sec:methodology}
\methodname learns robust representations that are predictive of future representations and discard task-irrelevant information. To learn such representations, we construct a new MIB objective that (i) reduces the problem of maximizing the mutual information between sequences of observations and representations to maximizing the mutual information between them at each timestep and
(ii) quantifies the amount of
task-irrelevant information in the representations in the multi-view setting.

\subsection{Mutual Information Maximization}
To capture the temporal evolution of observations and representations given any action sequence,
we consider maximizing the conditional mutual information $\gI(S_{1:T}; O_{1:T}| A_{1:T})$ which is a lower bound of $\gI(O_{1:T}; S_{1:T})$ (see Appendix~\ref{appendix:theorem 1}). The observations $O_{1:T}$ are generated sequentially in the environment by executing the actions $A_{1:T}$. The conditional mutual information not only estimates the sufficiency of the representations but also maintains the temporal structure of RL problems.

To tackle the challenges of estimating the mutual information between sequences, we first factorize the mutual information between two sequential data to the mutual information at each timestep.
\begin{theorem}
    Let $O_{1:T}$ be the sequential observations obtained by executing action sequence $A_{1:T}$.
    If $S_{1:T}$ is a sequence of sufficient representations for $O_{1:T}$, we have:
    \begin{equation}
        \gI(S_{1:T}; O_{1:T}| A_{1:T}) \ge \sum_{t=1}^T \gI(S_t; O_t | S_{t-1}, A_{t-1})
       \label{eq:lower_bound}
    \end{equation} \label{thm:lower_bound}
\vspace{-2mm}
\end{theorem}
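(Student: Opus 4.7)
The strategy is to decompose the left-hand side by the chain rule of mutual information, then pick up the per-timestep structure of the right-hand side by combining monotonicity of conditional mutual information with the Markov properties built into the multi-view trajectory density in \Eqref{eq: multi-view sequences}.

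Concretely, I would start by writing
\begin{equation*}
I(\vs_{1:T}; \vo_{1:T}\mid \va_{1:T}) \;=\; \sum_{t=1}^{T} I\bigl(\vs_t;\,\vo_{1:T}\,\big|\,\vs_{1:t-1},\va_{1:T}\bigr),
\end{equation*}
which is just the chain rule applied to the $\vs_{1:T}$ argument. Next, since $\vo_t$ is a sub-variable of $\vo_{1:T}$, the monotonicity (data-processing) property of conditional mutual information yields
\begin{equation*}
I\bigl(\vs_t;\,\vo_{1:T}\,\big|\,\vs_{1:t-1},\va_{1:T}\bigr) \;\ge\; I\bigl(\vs_t;\,\vo_t\,\big|\,\vs_{1:t-1},\va_{1:T}\bigr).
\end{equation*}
Finally, I would appeal to the MDP factorization in \Eqref{eq: multi-view sequences}: conditional on $(\vs_{t-1},\va_{t-1})$, the pair $(\vs_t,\vo_t)$ is independent of the earlier states $\vs_{1:t-2}$, earlier actions $\va_{1:t-2}$ and future actions $\va_{t:T}$, because $\vs_t$ is drawn from $\gP(\,\cdot\,\mid \vs_{t-1},\va_{t-1})$ and $\vo_t$ from $\gP_{\text{obs}}^{(i)}(\,\cdot\,\mid \vs_t)$ only. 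Standard manipulation of conditional mutual information under this Markov property collapses the conditioning set:
\begin{equation*}
I\bigl(\vs_t;\,\vo_t\,\big|\,\vs_{1:t-1},\va_{1:T}\bigr) \;=\; I\bigl(\vs_t;\,\vo_t\,\big|\,\vs_{t-1},\va_{t-1}\bigr).
\end{equation*}
Summing over $t$ (and adopting the convention that $\vs_0,\va_0$ are empty so the $t=1$ term reads $I(\vs_1;\vo_1)$) gives \eqref{eq:lower_bound}.

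The main obstacle, and the step where sufficiency (Definition~\ref{def:sufficiency}) implicitly enters, is justifying the Markov reduction in the last display. The generative factorization in \Eqref{eq: multi-view sequences} is stated for the true latent states; the representations $\vs_{1:T}$ in the theorem are learned encodings that we are asserting form a surrogate MDP with the same conditional-independence graph. One needs to check that sufficiency of $\vs_{1:T}$ (together with $\vs_t$ being produced from the observed history in a way compatible with the encoder) propagates the required d-separation so that $(\vs_{1:t-2},\va_{1:t-2},\va_{t:T})$ is indeed redundant once $(\vs_{t-1},\va_{t-1})$ is conditioned on. Everything else is a routine consequence of chain rule and monotonicity; the care is concentrated in making this conditional-independence bookkeeping precise.
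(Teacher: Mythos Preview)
Your argument is correct and follows the same route as the paper's proof in Appendix~\ref{appendix:theorem 1}: chain rule on $\vs_{1:T}$, then lower-bound by reducing $\vo_{1:T}$ to the single $\vo_t$, then collapse the conditioning to $(\vs_{t-1},\va_{t-1})$ via the Markov structure of \Eqref{eq: multi-view sequences}. The only cosmetic difference is that the paper carries out the middle step by applying a second chain rule on $\vo_{1:T}$ and retaining only the $\tau=t$ summand (which leaves an extra $\vo_{1:t-1}$ in the conditioning before the Markov step) rather than invoking monotonicity directly, and the paper, like you, treats the final Markov reduction as a consequence of the trajectory factorization without separately isolating where sufficiency enters.
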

The proof is included in Appendix~\ref{appendix:theorem 1}.
Theorem~\ref{thm:lower_bound} shows that the sum of mutual information $ \gI(S_t; O_t | S_{t-1}, A_{t-1})$ over multiple timesteps is a lower bound of $\gI(S_{1:T}; O_{1:T}| A_{1:T})$.
$\gI(S_t; O_t | S_{t-1}, A_{t-1})$ is defined with conditional probabilities, $p(\vs_t, \vo_t | \vs_{t-1}, \va_{t-1})$, $p(\vs_t | \vs_{t-1}, \va_{t-1})$ and $p(\vo_t|\vs_{t-1}, \va_{t-1})$.
This lower bound models the dynamics and temporal structure of RL since the first conditional probability is the composition of transition probability and observation model and the second conditional probability is the transition probability of the underlying MDP.
Thus, the factorized mutual information explicitly retains the predictive property of representations.
Even when the representations $S_{1:T}$ are not sufficient, maximizing the mutual information between $S_t$ and $O_t$ ($\gI(S_t; O_t | S_{t-1}, A_{t-1})$) encourages encoding more detailed features from $O_t$ into $S_t$ and makes $S_t$ sufficient.

\subsection{Multi-View Setting}
To learn sufficient representations with minimal task-irrelevant information, we consider a two-view setting to identify the task-irrelevant information without supervision.
Consider $\vo_t^{(1)}$ and $\vo_t^{(2)}$ to be two visual images of the control scenario from two different viewpoints.
Under the multi-view assumption, any representation $\vs_t$ containing all information accessible from both views and is predictive of future representations would contain sufficient task-relevant information. Furthermore, if $\vs_t$ captures only the details that are visible from both observations, it would eliminate the view-specific details and reduce the sensitivity of the representation to view-changes.

A sufficient representation in RL retains all the information that is shared by mutually redundant observations $O_t^{(1)}$ and $O_t^{(2)}$. We refer to Appendix~\ref{appendix:thm_proofs} for the sufficiency condition of representations and mutually redundancy condition~\citep{federici2020learning} between $O_t^{(1)}$ and $O_t^{(2)}$. Intuitively, with the mutual redundancy condition, any representation that contains all the information shared by both views is as task-relevant as the joint observation. By factorizing the mutual information between $S_t^{(1)}$ and $O_t^{(1)}$ as in \Eqref{eq:superflous}, we can identify two components:
\begin{align}
    \gI(S_t^{(1)} ; O_t^{(1)} | S_{t - 1}^{(1)}, &A_{t - 1} ) {=} \gI(S_t^{(1)} ; O_t^{(1)} | S_{t - 1}^{(1)}, A_{t - 1}, O_t^{(2)}) \nonumber \\
    &{+} \gI(O_t^{(2)};S_t^{(1)} | S_{t - 1}^{(1)}, A_{t - 1})
    \label{eq:multi-view}
\end{align}
Here, $S_{t-1}^{(1)}$ is a representation of visual observation $O_{t-1}^{(1)}$.
Since we assume mutual redundancy between the two views, the information shared between $O_t^{(1)}$ and $S_t^{(1)}$ conditioned on $O_t^{(2)}$ must be irrelevant to the task, which can be quantified as $\gI(S_t^{(1)} ; O_t^{(1)} | S_{t - 1}^{(1)}, A_{t - 1}, O_t^{(2)})$ (first term in \Eqref{eq:multi-view}). Then, $\gI(O_t^{(2)};S_t^{(1)} | S_{t - 1}^{(1)}, A_{t - 1})$ has to be maximal if the representation is sufficient.
A formal description of the above statement can be found in Appendix~\ref{appendix:thm_proofs}.

The less the two views have in common, the less task-irrelevant information can be encoded into the representations without violating sufficiency, and consequently, the less sensitive the resulting representation is to task-irrelevant nuisances. In the extreme, $\vs_t^{(1)}$ is the underlying states of MDP if $\vo_t^{(1)}$ and $\vo_t^{(2)}$ share only task-relevant information. With~\Eqref{eq:lower_bound} and~\ref{eq:multi-view}, we have $\gL_{\text{IB}}^{(1)}$ to maximize the mutual information between representations and observations
and compress away task-irrelevant information $\gI(O_t^{(2)};S_t^{(1)} | S_{t - 1}^{(1)}, A_{t - 1})$ based on the information bottleneck principle. $\lambda_1$ is a Lagrange multiplier. This loss also retains the temporally evolving information of the underlying dynamics.
\begin{align}
    \gL_{\text{IB}}^{(1)} &= -\sum_t (\gI(S_t^{(1)} ; O_t^{(1)} | S_{t - 1}^{(1)}, A_{t - 1}, O_t^{(2)}) \nonumber \\ &- \lambda_1 \gI(O_t^{(2)};S_t^{(1)} | S_{t - 1}^{(1)}, A_{t - 1}))
    \label{eq::IB-1}
\end{align}
Symmetrically, we define a loss $\gL_{\text{IB}}^{(2)}$ for representations and observations from view $2$:
\begin{align}
    \gL_{\text{IB}}^{(2)} &= -\sum_t (\gI(S_t^{(2)} ; O_t^{(2)} | S_{t - 1}^{(2)}, A_{t - 1}, O_t^{(1)}) \nonumber \\ &- \lambda_2 \gI(O_t^{(1)};S_t^{(2)} | S_{t - 1}^{(2)}, A_{t - 1}))
    \label{eq::IB-2}
\end{align}
The above losses extend MIB to RL and minimizing it learns representations that are robust to task-irrelevant distractors and predictive of the future. The multi-view observations can be easily obtained with random data augmentation techniques so that each view is augmented differently.

\label{sec:mutual_information}

\subsection{\methodname Loss Function}

By re-parameterizing the Lagrangian multipliers (details in Appendix~\ref{appendix:loss_computation}), the average of two loss functions $\gL_{\text{IB}}^{(1)}$ and $\gL_{\text{IB}}^{(2)}$ at timestep $t$ can be upper bounded as follows:
\begin{align}
    &\gL_t(\theta;\beta) {=} {-} \gI_\theta(S_t^{(1)}; S_t^{(2)} | S_{t - 1}, A_{t - 1}) + 
    \label{eq:new_objective}
    \\
    &\beta D_{\text{SKL}} (p_\theta (\vs_t^{(1)} | \vo_t^{(1)}, \vs_{t - 1}^{(1)}, \va_{t - 1}) || p_\theta (\vs_t^{(2)} | \vo_t^{(2)}, \vs_{t - 1}^{(2)}, \va_{t - 1})) \nonumber
\end{align}
where $\theta$ denotes the parameters of an encoder $p_\theta(\vs_t|\vo_t, \vs_{t-1}, \va_{t-1})$, $D_{\text{SKL}}$ represents the symmetrized KL divergence obtained by averaging the expected values of $D_{\text{KL}} (p_\theta (\vs_t^{(1)} | \vo_t^{(1)}, \vs_{t - 1}^{(1)}, \va_{t - 1}) || p_\theta (\vs_t^{(2)} | \vo_t^{(2)}, \vs_{t - 1}^{(2)}, \va_{t - 1}))$, $D_{\text{KL}} (p_\theta (\vs_t^{(2)} | \vo_t^{(2)}, \vs_{t - 1}^{(2)}, \va_{t - 1}) ||p_\theta (\vs_t^{(1)} | \vo_t^{(1)}, \vs_{t - 1}^{(1)}, \va_{t - 1}))$, and the coefficient $\beta$ represents the trade-off between sufficiency and sensitivity to task-irrelevant information. $\beta$ is a hyper-parameter.

To generalize the above loss to sequential data in RL, we apply Theorem~\ref{thm:lower_bound} to obtain the \methodname loss:
$\gL_{\text{\methodname}} {=} \frac{1}{T} \sum_{t=1}^T \gL_t(\theta; \beta)$.
We summarize the batch-based computation of the loss function in \Algref{algorithm:DRMIB loss}.
We sample $\vs_t^{(1)}$ and $\vs_t^{(2)}$ from $p_\theta(\vs_t^{(1)}|\vo_t^{(1)}, \vs_{t-1}^{(1)}, \va_{t-1})$ and $p_\theta(\vs_t^{(2)}|\vo_t^{(2)}, \vs_{t-1}^{(2)}, \va_{t-1})$ respectively. 
The symmetrized KL divergence term
can be computed from the probability density of $S_t^{(1)}$ and $S_t^{(2)}$ using the encoder. The mutual information between the two representations $\gI_\theta(S_t^{(1)}; S_t^{(2)} | S_{t - 1}, A_{t - 1})$ can be maximized by using any sample-based differentiable mutual information lower bound $\hat{I}_{\psi}(\vs_t^{(1)}, \vs_t^{(2)})$, where $\psi$ represents the learnable parameters. We use InfoNCE~\citep{oord2018representation} to estimate mutual information since the multi-view setting provides a large number of negative examples.
The positive pairs are the representations $(\vs_t^{(1)}, \vs_t^{(2)})$ of the multi-view observations generated from the same observation.
The remaining pairs of representations within the same batch are used as negative pairs.
The full derivation of the \methodname loss function can be found in Appendix~\ref{appendix:loss_computation}.

\begin{algorithm}[t]
    \caption{\methodname Loss}
    \begin{algorithmic}[1]
    \INPUT: Batch $\gB$
    storing $N$ sequential observations and actions with length $T$ from replay buffer.
    \STATE Apply random augmentation transformations on $\gB$ to obtain multi-view batches $\gB^{(1)}$ and $\gB^{(2)}$.
    \FOR{$i, (\vo_{1:T}^{(1)}, \vo_{1:T}^{(2)}, \va_{1:T})$ in enumerate $(\gB^{(1)}, \gB^{(2)})$}
        \FOR{$t=1$ to $T$}
            \STATE $\vs_t^{(1)} \sim p_\theta(\vs_t^{(1)}|\vo_t^{(1)}, \vs_{t-1}^{(1)}, \va_{t-1})$
            \STATE
            $\vs_t^{(2)} \sim p_\theta(\vs_t^{(2)}|\vo_t^{(2)}, \vs_{t-1}^{(2)}, \va_{t-1})$
            \STATE
            $(\vs^{(1), t + T(i - 1)}, \vs^{(2), t+T(i-1)}) \leftarrow (\vs_t^{(1)}, \vs_t^{(2)})$
        \ENDFOR
        \STATE $\gL_\text{SKL}^{i} =\frac{1}{T} \sum_{t=1}^T D_\text{DKL}(p_\theta(\vs_t^{(1)}) || p_\theta(\vs_t^{(2)})) $
    \ENDFOR
        \STATE \textbf{return} $ - \hat{I}_\psi(\{(\vs^{(1), i}, \vs^{(2), i})\}_{i=1}^{T * N}) + \frac{\beta}{N} \sum_{i=1}^N \gL_\text{SKL}^i$
    \end{algorithmic}
    \label{algorithm:DRMIB loss}
    \vspace{-1mm}
\end{algorithm}


\subsection{Encoder Architecture}
\label{sec:encoder_architecture}
We implement the encoder as a recurrent space model (RSSM~\citep{hafner2019learning})
which leverages
recurrent neural networks to perform accurate long-term predictions.
More details can be found in Appendix~\ref{appendix:RSSM}.
Training an RSSM encoder with \methodname enables the representations to be predictive of future states.



We simultaneously train our representation learning models and the RL agent by adding $\gL_\text{\methodname}$ (\Algref{algorithm:DRMIB loss}) as an auxiliary objective during training.
The multi-view observations
can be easily obtained using the same experience replay of RL agents through data augmentation.
We demonstrate the effectiveness of \methodname by building the agents on top of SAC~\citep{haarnoja2018soft}, an off-policy RL algorithm, and PPO~\citep{schulman2017proximal}, an on-policy RL algorithm, in \Secref{sec:DMC_experiment} and \Secref{sec:procgen_experiment} respectively. More details can be found in Appendix~\ref{appendix:implementation}.

\section{Experiments}
\label{sec:experiment}
We experimentally evaluate \methodname on a variety of visual control tasks. We designed the experiments to compare \methodname to the current best methods in the literature on:
(i) the effectiveness of solving visual control tasks, (ii) their robustness against task-irrelevant distractors, and (iii) the ability to generalize to unseen environments.
For effectiveness and robustness, we demonstrate \methodname's  performance on DeepMind Control Suite (DMC~\citep{tassa2018deepmind}) with task-irrelevant visual
distractors in backgrounds. The backgrounds are replaced with natural videos from the Kinetics dataset~\citep{kay2017kinetics} (middle column in~\Figref{fig:pixel_observations}).
In Appedix~\ref{appendix:add_dmc_results}, we also show comparison between \methodname and SOTAs on DMC environments without the background distractors (left column in~\Figref{fig:pixel_observations}).
For generalization, we present results on Procgen~\citep{cobbe2019leveraging} which provides different levels of the same game to test how well agents generalize to unseen levels.
We use single-step observations to train \methodname without assuming that observation at each timestep provides full observability of the underlying dynamics.
By contrast, current SOTA approaches require the use of consecutive observations~\footnote{All other methods compared in this paper use stack frames of 3 consecutive observations except DreamerV2 and \methodname.
}
to capture predictive properties
of the underlying states.

\begin{figure}[t!]
    \centering
    \includegraphics[width=.6\columnwidth]{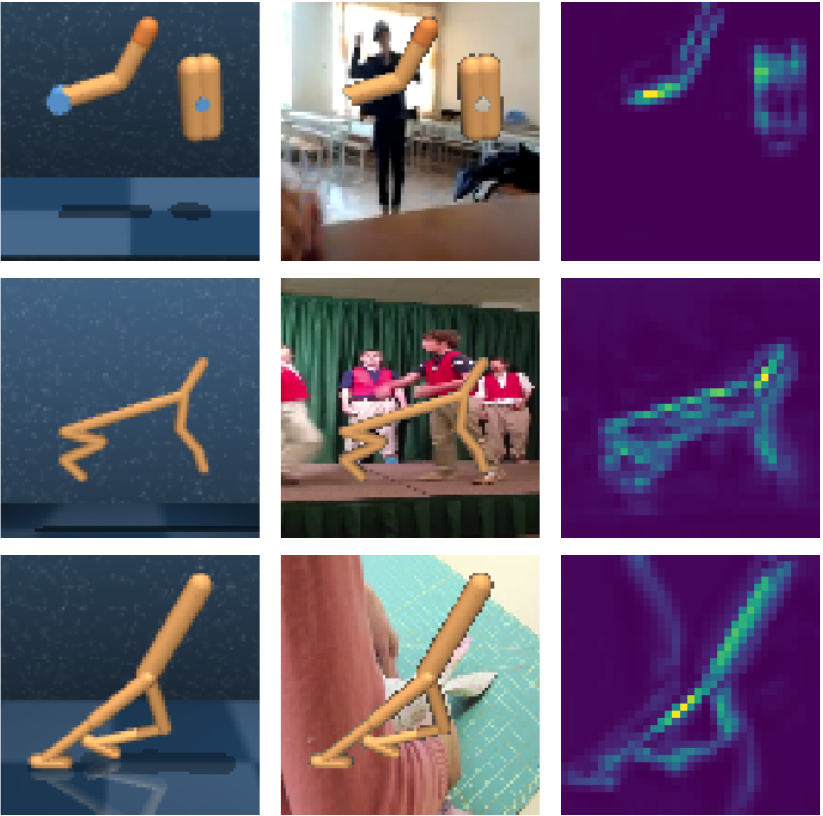}
    \caption{
    Left: DMC observations without visual distractors. Middle: Observations with natural videos as backgrounds. Right: Spatial attention maps of encoders for the images in the middle.
    }
    \label{fig:pixel_observations}
\end{figure}

We also conduct careful ablations analysis to show that the benefit of \methodname is due primarily to learning from the temporal structure of RL and the \methodname loss.

For the DMC suite, all agents are built on top of SAC. For the Procgen suite, we augment PPO, a RL baseline for Procgen, with \methodname. 
Implementation details are in Appendix~\ref{appendix:implementation}.



\subsection{Effectiveness and Robustness on DMC}
\label{sec:DMC_experiment}
We compare \methodname against several SOTA methods.
The first is SLAC~\citep{lee2020stochastic}, a SOTA representation learning method for RL that learns a dynamic model using a reconstruction loss.
The second is DreamerV2~\citep{hafner2020mastering}, a recent method that learns a discrete state representations using RSSM model, but they need a reconstruction loss to learn latent representations and use model-based RL method to train policies. It is a leading representation learning method in RL settings for visual continuous control.
The third is RAD~\citep{laskin2020reinforcement}, a recent method that uses augmented data to train pixel-based policies on DMC benchmarks.
The fourth is CURL~\citep{laskin2020curl}, an approach that leverages contrastive learning to maximize the mutual information between representations of augmented versions of the same observation but does not distinguish
between relevant and irrelevant features.
The fifth is DBC~\citep{zhang2021learning} which shares a similar goal with \methodname.
DBC learns an invariant representation based on bisimulation metrics without requiring reconstruction.
Finally, we compare with PI-SAC~\citep{lee2020predictive}
which leverages the Predictive Information to compress away task-irrelevant information with CEB.
We apply \textit{random crop} to obtain the augmented data for RAD and CURL which achieve the best performance. For \methodname, we apply \textit{random crop} to obtain augmented data and multi-view observations.



\begin{figure}[t!]
    \centering
    \subfloat{	\includegraphics[width=\columnwidth]{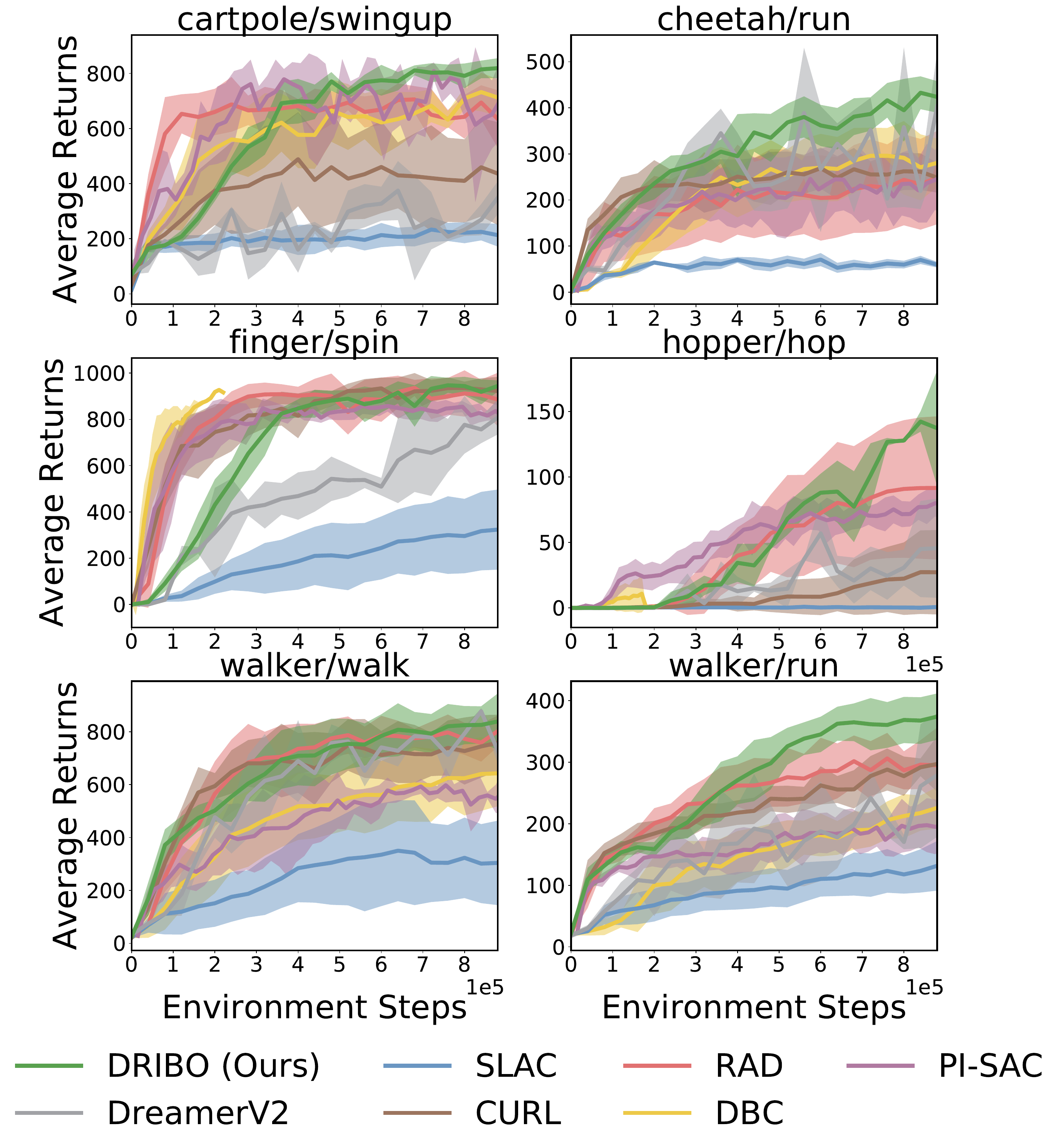}
    }
    \caption{Results for DMC over 5 seeds with one standard error shaded in the natural video setting.}
    \label{fig:natural-setting}
\end{figure}

\textbf{Natural Video Setting.}
To investigate the effectiveness and robustness of RL agents in DMC environments,
we introduce high-dimensional visual distractors by using
natural videos from the Kinetics dataset~\citep{kay2017kinetics} as
backgrounds~\citep{zhang2018natural} (\Figref{fig:pixel_observations}: middle column). We use the class of ``arranging flowers'' videos to replace the background in training. During testing, we use the test set from the Kinetics dataset to replace the background, which contains videos from various different classes.
Note that a single run of the DMC task may have \textit{multiple videos playing sequentially} in the background.
See Appendix~\ref{sec:additional_visualization} for snapshots under the natural video setting.

In~\Figref{fig:pixel_observations}, spatial attention maps~\citep{zagoruyko2017paying}
of the trained \methodname encoder
show that \methodname trains agents that focus on the robot's body and ignore irrelevant scene details in the background. \Figref{fig:natural-setting} shows that \methodname performs better than reconstruction-based methods, SLAC and DreamerV2, which do not discard any task-irrelevant information for reconstruction purposes. For RAD and CURL, which do not rely on a reconstruction loss, since they also do not discard task-irrelevant information explicitly when learning the latent representations, their performance is inferior to \methodname's. Compared to PI-SAC and DBC which are recent state-of-art methods aimed at learning representations invariant to task-irrelevant information, \methodname outperforms them consistently.
Overall, \methodname achieves on average $25\%$ and at the maximum $51\%$ higher returns at 88e4 steps compared to the second best performing method.

\begin{figure}[!b]
    \centering
    \includegraphics[width=\columnwidth]{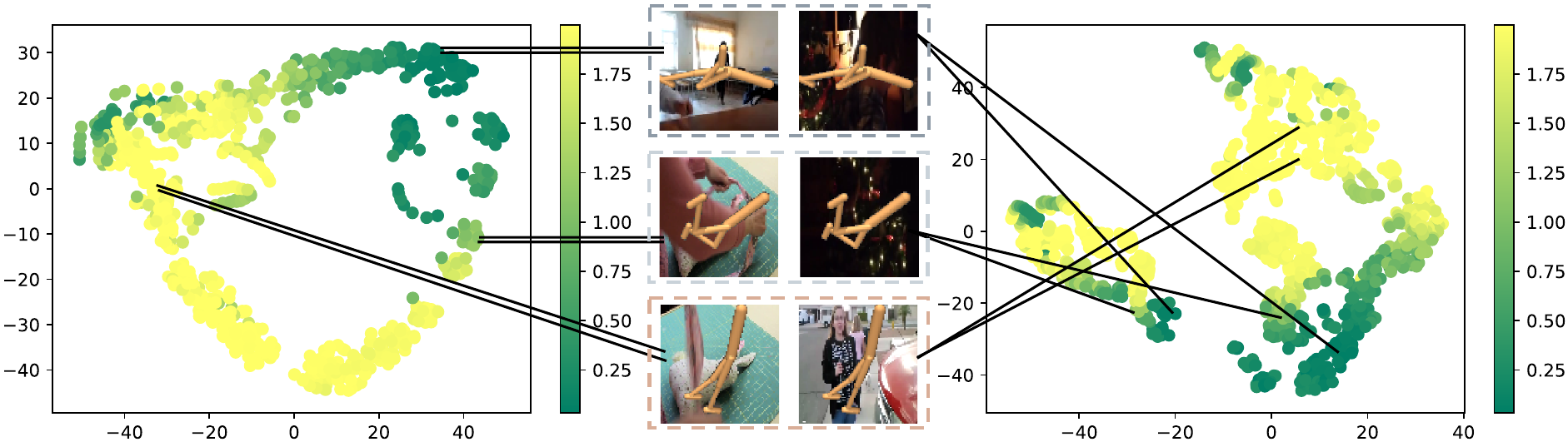}
    \caption{t-SNE of latent spaces learned with \methodname (left t-SNE) and CURL (right t-SNE). We color-code the embedded points with reward values (higher value yellow, lower value green). 
    Each pair of solid lines indicates the corresponding embedded points for observations with an identical foreground but different backgrounds. \methodname learns representations that are neighboring in the embedding space with similar reward values. This property holds even if the backgrounds are drastically different (see middle images). 
    By contrast, CURL maps the same image pairs to points far away from each other in the embedding space.}
    \label{fig:t-SNE}
\end{figure}

\textbf{Visualization of Representations.}
We visualize the representations learned by \methodname and CURL with t-SNE~\citep{van2008visualizing}.
\methodname differs from CURL from mainly two aspects and these two differences are the main contributions of \methodname. The first is that the learning objectives are different (InfoMax vs. \methodname loss).
Second, \methodname is trained with sequential data whereas CURL is trained from single-step observations.
\Figref{fig:t-SNE} shows that even when the background looks drastically different, \methodname learns to disregard irrelevant information and maps observations with similar robot configurations to the neighborhoods of one another. The color code represents value of the reward for each representation.
The rewards can be viewed as task-relevant signals provided by the environments.
\methodname learns representations that are close in the latent space with similar reward values.



\subsection{Generalization Performance on Procgen}
\label{sec:procgen_experiment}

The natural video setting of DMC is suitable for benchmarking robustness to high-dimensional visual distractors. However, the task-relevant information and the task difficulties are unchanged.
Thus,
we use the ProcGen suite
to investigate the generalization capabilities of \methodname.
For each game, agents are trained
on the first 200 levels, and evaluated w.r.t. their zero-shot performance
averaged over unseen levels during testing.
Unseen levels typically have different backgrounds or
layouts, which are
relatively 
easy for humans to adapt to but challenging for RL agents.

We compare \methodname with recent methods that incorporate data augmentation.
In addition to comparing with RAD, we compare \methodname with DrAC~\citep{raileanu2020automatic}
which applies two regularization terms for policy and value function using augmented data. UCB-DrAC is built on top of DrAC, which automatically selects the best type of data augmentation for DrAC. 
For RAD and DrAC, we use the best reported augmentation types for different environments. \methodname selects the same augmentation types except for a few games.
The details can be found in Appendix~\ref{appendix:implementation}. We also compare the Procgen results with DAAC~\citep{raileanu2021decouple} and IDAAC~\citep{raileanu2021decouple}, two state-of-art methods on the Procgen suite that do not apply data augmentation. DAAC decouples the learning of the policy and value function in RL to improve the generalization of RL. IDAAC is built on top of DAAC by adding an auxiliary loss based on an adversarial framework. 

\begin{table*}[!tp]
\centering
\caption{Procgen returns on test levels after training on 25M environment steps. The mean and standard deviation are computed over 10 seeds.}
\begin{adjustbox}{width=.72\textwidth, center}
\begin{tabular}{@{}c|c|ccc|cc|c@{}}
\toprule
Env & PPO & RAD & DrAC & UCB-DrAC & DAAC & IDAAC & \methodname \\ \midrule
\multicolumn{1}{c|}{BigFish} & 4.0 ${\pm}$ 1.2 & 9.9 ${\pm}$ 1.7 & 8.7 ${\pm}$ 1.4 & 9.7 ${\pm}$ 1.0 & 17.8 ${\pm}$ 1.4 & \textbf{18.5 ${\pm}$ 1.2} & 10.9 ${\pm}$ 1.6 \\
\multicolumn{1}{c|}{StarPilot} & 24.7 ${\pm}$ 3.4 & 33.4 ${\pm}$ 5.1 & 29.5 ${\pm}$ 5.4 & 30.2 ${\pm}$ 2.8 & 36.4 ${\pm}$ 2.8 & \textbf{37.0 ${\pm}$ 2.3} & 36.5 ${\pm}$ 3.0 \\
\multicolumn{1}{c|}{FruitBot} & 26.7 ${\pm}$ 0.8 & 27.3 $\pm$ 1.8 & 28.2 ${\pm}$ 0.8 & 28.3 ${\pm}$ 0.9 & 28.6 ${\pm}$ 0.6 & 27.9 ${\pm}$ 0.5 & \textbf{30.8 ${\pm}$ 0.8} \\
\multicolumn{1}{c|}{BossFight} & 7.7 $\pm$ 1.0 & 7.9 ${\pm}$ 0.6 & 7.5 $\pm$ 0.8 & 8.3 ${\pm}$ 0.8 & 9.6 ${\pm}$ 0.5 & 9.8 ${\pm}$ 0.6 & \textbf{12.0 ${\pm}$ 0.5} \\
\multicolumn{1}{c|}{Ninja} & 5.9 ${\pm}$ 0.7 & 6.9 $\pm$ 0.8 & 7.0 ${\pm}$ 0.4 & 6.9 $\pm$ 0.6 & 6.8 ${\pm}$ 0.4 & 6.8 ${\pm}$ 0.4 & \textbf{9.7 ${\pm}$ 0.7} \\
\multicolumn{1}{c|}{Plunder} & 5.0 $\pm$ 0.5 & 8.5 ${\pm}$ 1.2 & 9.5 $\pm$ 1.0 & 8.9 ${\pm}$ 1.0 & 20.7 ${\pm}$ 3.3 & \textbf{23.3 ${\pm}$ 1.4} & 5.8 ${\pm}$ 1.0 \\
\multicolumn{1}{c|}{CaveFlyer} & 5.1 ${\pm}$ 0.9 & 5.1 $\pm$ 0.6 & 6.3 ${\pm}$ 0.8 & 5.3 $\pm$ 0.9 & 4.6 ${\pm}$ 0.2 & 5.0 ${\pm}$ 0.6 & \textbf{7.5 ${\pm}$ 1.0} \\
\multicolumn{1}{c|}{CoinRun} & 8.5 $\pm$ 0.5 & 9.0 ${\pm}$ 0.8 & 8.8 $\pm$ 0.2 & 8.5 ${\pm}$ 0.6 & 9.2 ${\pm}$ 0.2 & \textbf{9.4 ${\pm}$ 0.1} & 9.2 ${\pm}$ 0.7 \\
\multicolumn{1}{c|}{Jumper} & 5.8 ${\pm}$ 0.5 & 6.5 $\pm$ 0.6 & 6.6 ${\pm}$ 0.4 & 6.4 $\pm$ 0.6 & 6.5 ${\pm}$ 0.4 & 6.3 ${\pm}$ 0.2 & \textbf{8.4 ${\pm}$ 1.6} \\
\multicolumn{1}{c|}{Chaser} & 5.0 $\pm$ 0.8 & 5.9 ${\pm}$ 1.0 & 5.7 ${\pm}$ 0.6 & 6.7 ${\pm}$ 0.6 & 6.6 ${\pm}$ 1.2 & \textbf{6.8 ${\pm}$ 1.0} & 4.8 ${\pm}$ 0.8 \\
\multicolumn{1}{c|}{Climber} & 5.7 ${\pm}$ 0.8 & 6.9 $\pm$ 0.8 & 7.1 $\pm$ 0.7 & 6.5 $\pm$ 0.8 & 7.8 ${\pm}$ 0.2 & \textbf{8.3 ${\pm}$ 0.4} & 8.1 ${\pm}$ 1.6 \\
\multicolumn{1}{c|}{DodgeBall} & \textbf{11.7 $\pm$ 0.3} & 2.8 ${\pm}$ 0.7 & 4.3 ${\pm}$ 0.8 & 4.7 ${\pm}$ 0.7 & 3.3 ${\pm}$ 0.5 & 3.3 ${\pm}$ 0.3 & 3.8 ${\pm}$ 0.9 \\
\multicolumn{1}{c|}{Heist} & 2.4 ${\pm}$ 0.5 & 4.1 $\pm$ 1.0 & 4.0 $\pm$ 0.8 & 4.0 ${\pm}$ 0.7 & 3.3 ${\pm}$ 0.2 & 3.5 ${\pm}$ 0.2 & \textbf{7.7 ${\pm}$ 1.6} \\
\multicolumn{1}{c|}{Leaper} & 4.9 $\pm$ 0.7 & 4.3 ${\pm}$ 1.0 & 5.3 ${\pm}$ 1.1 & 5.0 ${\pm}$ 0.3 & 7.3 ${\pm}$ 1.1 & \textbf{7.7 ${\pm}$ 1.0} & 5.3 ${\pm}$ 1.5 \\
\multicolumn{1}{c|}{Maze} & 5.7 ${\pm}$ 0.6 & 6.1 ${\pm}$ 1.0 & 6.6 ${\pm}$ 0.8 & 6.3 ${\pm}$ 0.6 & 5.5 ${\pm}$ 0.2 & 5.6 ${\pm}$ 0.3 & \textbf{8.5 ${\pm}$ 1.6} \\
\multicolumn{1}{c|}{Miner} & 8.5 ${\pm}$ 0.5 & 9.4 ${\pm}$ 1.2 & \textbf{9.8 ${\pm}$ 0.6} & 9.7 ${\pm}$ 0.7 & 8.6 ${\pm}$ 0.9 & 9.5 ${\pm}$ 0.4 & \textbf{9.8 ${\pm}$ 0.9} \\
\bottomrule
\end{tabular}
\end{adjustbox}
\label{tab:all_models}
\end{table*}

Table~\ref{tab:all_models} shows that \methodname achieves higher averaged testing returns compared to the PPO baseline and other augmentation-based methods. In addition, \methodname outperforms DAAC in \textit{11 of the 16 games}. IDAAC is built on top of DAAC and achieves better performance than DAAC. IDAAC uses an adversarial framework to reduce task-irrelevant information in the representations. 
Compared to IDAAC, \methodname achieves better performance in \textit{9 of the 16 games}.

\methodname explicitly leverages the temporal structures of RL to compress away task-irrelevant information.
IDAAC treats such temporal structures as instance-dependent features for specific levels in the Procgen games. For example, representations learned by \methodname would carry information about the number of remaining steps in the training levels whereas IDAAC compresses away such information via an adversarial framework. 
\methodname outperforms IDAAC on games where the lengths of episodes are similar across levels.

The few environments, in which our approach does not outperform the other augmentation-based methods, share the commonality that task-relevant layouts remain static throughout the same run of the game.
Since the current version of \methodname only considers the mutual information between the complete input and the encoder output (global MI~\citep{hjelm2019learning}), it may fail to capture local features.
The representations for a sequence of observations within the same run of the game are treated as globally negative pairs in \methodname but they may be locally positive pairs.
Thus, the performance  of \methodname can be further improved by considering local features (e.g. positions of the layouts) shared between representations as positive pairs in the mutual information estimation. In addition,
for certain augmentations like `rotate' that do not remove task-irrelevant information, DRIBO is expected to be less effective.
We leave this investigation to future work.

\begin{figure}[t!]
    \centering
    \subfloat{	\includegraphics[width=\columnwidth]{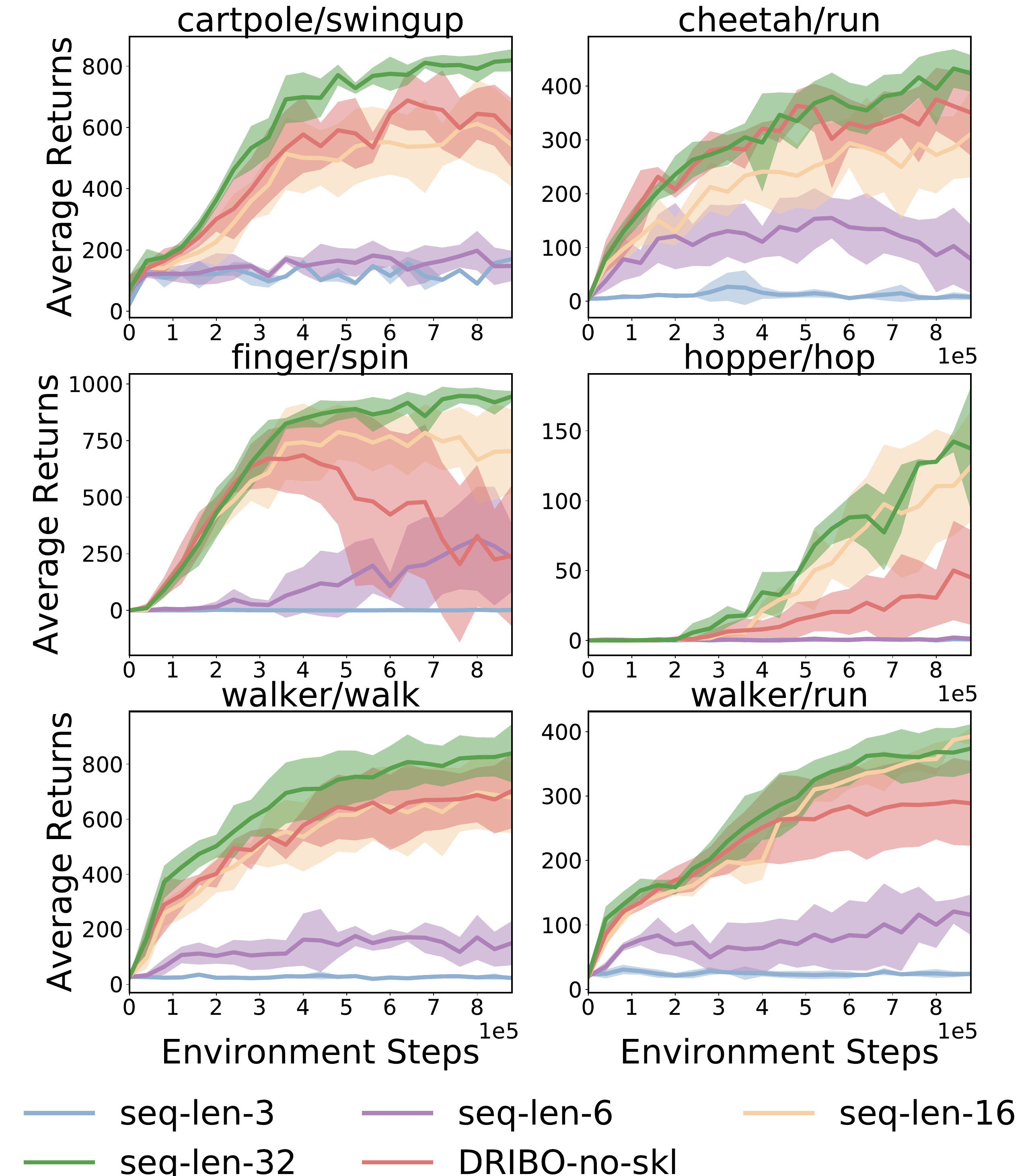}
    }
    \caption{\methodname achieves better performance by capturing the temporal structure of RL from longer training sequences. Compressing away task-irrelevant information using the SKL term in \methodname loss improves performance when the architecture choice and training configurations are the same. We perform 5 runs for each method under the natural video setting. More results can be found in Appendix~\ref{appendix:dribo_no_skl}.}
    \label{fig:ablations}
    \vspace{-3.5mm}
\end{figure}

\subsection{Ablations}
\label{sec:ablations}

\textbf{Temporal Structure of RL.}
To investigate whether \methodname captures the temporal structure of RL, we conducted further experiments on \methodname agents trained using sequences of different lengths under the natural video setting. Longer sequences carry more temporal information for \methodname to learn. By default, we train \methodname with sequences of length 32. In this ablation study, we present results of \methodname trained using sequences of lengths 3, 6 and 16 respectively in~\Figref{fig:ablations}.
Using sequences of length 3 is similar to stacking 3 consecutive frames which is a common choice for training in DMC. Using sequences of length 6 is similar to the design choice made in PI-SAC (3 steps for the past and 3 steps for the future). We also include results on using 16-step sequential observations to investigate how \methodname's performance changes as the length of the sequences increases. Theoretically, the \methodname loss provides a lower bound on the mutual information between sequences of observations and sequences of latent state representations with the same length. It can be observed that \methodname performs significantly better when training with longer sequences, suggesting that the representations that are predictive improve the robustness and performance of the learned policy.

\begin{figure}[t!]
    \centering
    \includegraphics[width=.9\columnwidth]{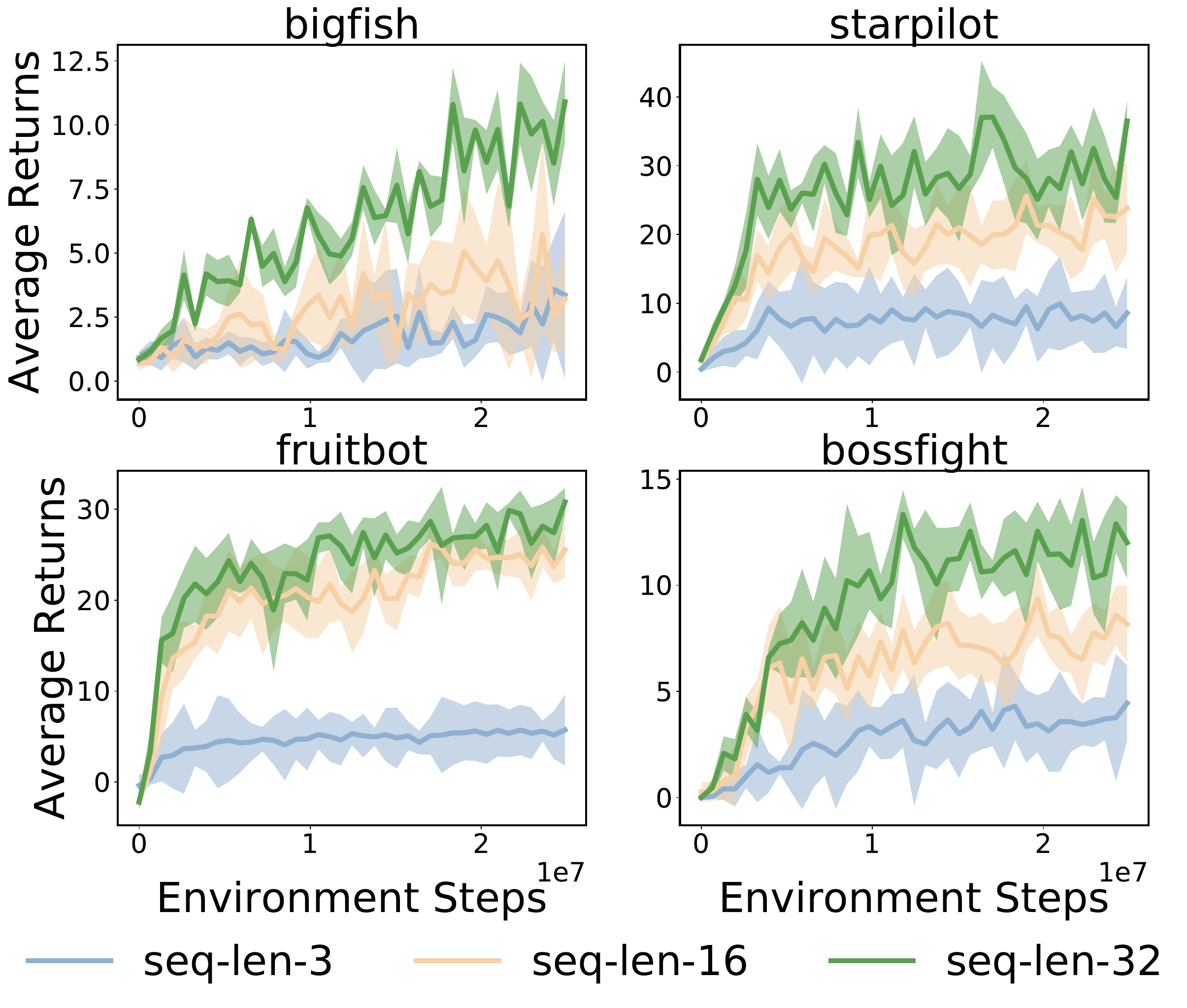}
    \caption{Ablations to \methodname measured by their Procgen performance when training with observation sequences of different lengths.}
    \label{fig:procgen_ablation}
\end{figure}
\begin{figure}[t!]
    \centering
    \subfloat{	\includegraphics[width=.9\columnwidth]{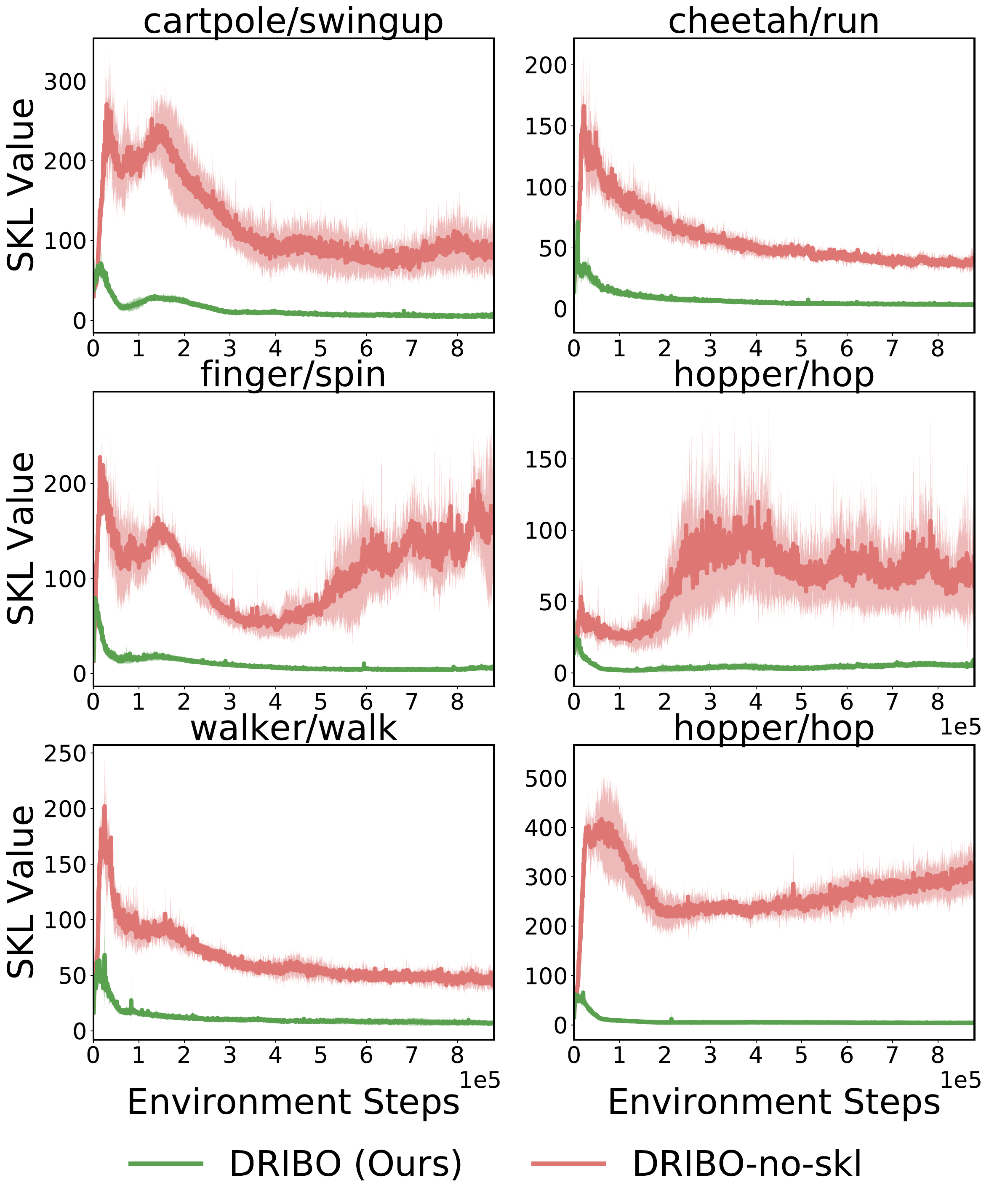}
    }
    \caption{Average SKL values during training in DMC environments with natural videos as background.}
    \label{fig:skl_value}
\end{figure}

We conduct a similar ablation study of \methodname on Procgen regarding learning the temporal structure of RL. \Figref{fig:procgen_ablation} shows that learning from longer sequences (default length: 32) with \methodname achieves better generalization compared to learning from shorter sequences.


\textbf{Learning Objective.}
In the \methodname loss, we use the SKL term (second term in~\Eqref{eq:new_objective}) to compress away task-irrelevant information. \Figref{fig:ablations} studies the effect of removing the SKL term in the \methodname loss (annotated as \methodname-no-skl). The objective without the SKL term is equivalent to the InfoMax objective. We use identical training configurations (e.g. sequence length of 32) for \methodname-no-skl and \methodname. The only difference is whether the SKL term is included in the learning objective. \Figref{fig:ablations} shows that \methodname outperforms \methodname-no-skl substantially in the natural video setting, demonstrating better robustness of the learned policy against unseen visual distractors.

To quantify the amount of task-irrelevant information retained in the representations, we compare the SKL term values between \methodname and \methodname-no-skl during training in~\Figref{fig:skl_value}. The gap between the SKL values explains the performance gap between the two approaches (as shown in~\Figref{fig:ablations}). The models trained with \methodname take advantage of the information bottleneck to map observations from different views close to each other in the latent space.
\Figref{fig:skl_value} shows that minimizing the \methodname loss consistently reduces the KL divergence between latent states from different views. On the other hand, the models trained with \methodname-no-skl fail to discard the task-irrelevant information contained in observations from different views even though the RSSM model helps to learn representations that are predictive. For \methodname-no-skl, the KL divergence between latent states from different views is consistently larger than the one learned by \methodname.

\section{Conclusion}
\label{sec:conclusion}
We introduce a novel robust representation learning approach based on the multi-view information bottleneck principle for RL problems.
Our experimental results show that (1) \methodname learns representations that are robust against task-irrelevant distractions and boosts the RL agent's performance even when complex visual distractors are introduced, and
(2) \methodname improves generalization performance compared to well-established baselines on the large-scale Procgen benchmarks.
We plan to explore the direction of incorporating knowledge about locality in the observations into \methodname in the future.
In addition,
our latent dynamics RSSM model was only used for training our encoder.
We plan to augment model-based RL algorithms with \methodname learned RSSM model to train RL agents.


\section*{Acknowledgements}
We gratefully acknowledge the support from ONR grant N00014-19-1-2496.
We also thank F. Fu, P. Kiourti, K. Wardega and W. Zhou  for insightful discussions.

\newpage
\bibliography{jiameng}
\bibliographystyle{icml2022}

\newpage
\clearpage
\onecolumn
\appendix
\section*{Appendix}

\section{Theorems and Proofs}
\label{appendix:thm_proofs}
In this section, we first list properties of the mutual information we use in our proof. For any random variables $X$, $Y$ and $Z$.

\begin{enumerate}[label=\textbf{(P.\arabic*)}]
    \setlength{\itemindent}{1em}
    \item \label{p:positive} Positivity:
    \begin{equation*}
        I(X;Y) \ge 0, I(X;Y|Z) \ge 0
    \end{equation*}
    \item \label{p:chain} Chain rule:
    \begin{equation*}
        I(XY;Z) = I(Y;Z) + I(X;Z|Y)
    \end{equation*}
    \item \label{p:multi_chain} Chain rule (Multivariate Mutual Information):
    \begin{equation*}
        I(X;Y;Z)=I(Y;Z) - I(Y;Z|X)
    \end{equation*}
    \item \label{p:m and h} Entropy and Mutual Information:
    \begin{equation*}
        I(X;Y) = H(X) - H(X|Y)
    \end{equation*}
    \item \label{p:h_chain} Chain rule for Entropy:
    \begin{equation*}
        H(X_1, X_2, \dots, X_n) {=} \sum_{i=1}^{n} H(X_i | X_{i-1}, {\dots}, X_1)
    \end{equation*}
    \item \label{p:bound}
    The mutual information among three variables is bounded by:
    \begin{equation*}
        {-}\min\{I(X;Y|Z), I(X;Z|Y), I(Y;Z|X)\} \le I(X;Y;Z) {\le} \min\{I(X;Y), I(X;Z), I(Y;Z)\}
    \end{equation*}
\end{enumerate}

\subsection{Theorem~\ref{thm:lower_bound}}
\label{appendix:theorem 1}

We first show that $I(S_{1:T}; O_{1:T} | A_{1:T})$ is a lower bound of $I(S_{1:T}; O_{1:T})$.
\begin{align*}
    I(S_{1:T}; O_{1:T} | A_{1:T}) \overset{\ref{p:multi_chain}}{=} I(O_{1:T}; S_{1:T}) - I(S_{1:T}; O_{1:T}; A_{1:T})
\end{align*}
Since $S_{1:T}$ are representations of $O_{1:T}$, we have $I(S_{1:T}; A_{1:T} | O_{1:T}) = 0$. With~\ref{p:positive} and~\ref{p:bound}, we have that:
\begin{align*}
    I(S_{1:T}; O_{1:T}; A_{1:T}) \ge 0
\end{align*}
Thus, we have:
\begin{align*}
    I(O_{1:T}; S_{1:T}) \ge I(S_{1:T}; O_{1:T} | A_{1:T})
\end{align*}
Furthermore, since $I(S_{1:T}; O_{1:T}; A_{1:T})$ is lower bounded, maximizing $I(S_{1:T}; O_{1:T} | A_{1:T})$ also maximizes $I(O_{1:T}; S_{1:T})$.

\begin{appendix_thm}
    Let $O_{1:T}$ be the observation sequence obtained by executing action sequence $A_{1:T}$.
    If $S_{1:T}$ is a sequence of sufficient representations for $O_{1:T}$, then we have:
    \begin{equation}
        I(S_{1:T}; O_{1:T}| A_{1:T}) \ge \sum_{t=1}^T I(S_t; O_t | S_{t-1}, A_{t-1})
        \label{eq:true_lower_bound}
    \end{equation}
    \label{thm:true_lower_bound}
\end{appendix_thm}
\begin{proof}
We indicate the property we use for each step of the derivation below.
\begin{align*}
    & I(S_{1:T}; O_{1:T}| A_{1:T})
    \nonumber \\
    \overset{\ref{p:m and h}}{=} &
    H(S_{1:T} | A_{1:T}) - H(S_{1:T} | O_{1:T}, A_{1:T})
    \nonumber \\
    \overset{\ref{p:h_chain}}{=} &
    \sum_t \left ( H(S_t | A_{1:T}, S_{1:t - 1}) - H (S_t | A_{1:T}, O_{1:T}, S_{1:t - 1}) \right )
    \nonumber \\
    \overset{\ref{p:m and h}}{=} &
    \sum_t I(S_t; O_{1:T} | A_{1:T}, S_{1:t - 1})
    \nonumber \\
    \overset{\ref{p:m and h}}{=} &
    \sum_t \left ( H(O_{1:T} | A_{1:T}, S_{1:t-1}) - H(O_{1:T} | S_t, A_{1:T}, S_{1:t-1}) \right )
    \nonumber \\
    \overset{\ref{p:h_chain}}{=} &
    \sum_t
    \sum_\tau (
    H(O_\tau | A_{1:T}, S_{1:t-1}, O_{1:\tau-1}) \nonumber \\
    &-
    H(O_\tau | S_t, A_{1:T}, S_{1:t-1}, O_{1: \tau-1}))
    \nonumber \\
    \overset{\ref{p:m and h}}{=} &
    \sum_t \sum_\tau
    I(S_t; O_\tau | A_{1:T}, S_{1:t-1}, O_{1: \tau-1})
    \nonumber \\
    \overset{\ref{p:positive}}{\ge} &
    \sum_t
    I(S_t; O_t | A_{1:T}, S_{1:t-1}, O_{1: t-1})
    \nonumber \\
    = &
    \sum_t I(S_t; O_t | S_{t-1}, A_{t-1})
\end{align*}
Here, we provide a formal proof for the last step of the derivation above. Let $\tau$ represent $\va_{1:T}, \vs_{1:t-1}, \vo_{1: t-1}$. We have
\begin{align*}
    &I(S_t; O_t | A_{1:T}, S_{1:t-1}, O_{1: t-1}) \\ &= \int_{\tau} \int_{\vs_t} \int_{\vo_t} p(\tau) p(\vs_t, \vo_t | \tau) \log \left [ \frac{p(\vs_t, \vo_t | \va_{1:T}, \vs_{1:t-1}, \vo_{1: t-1})}{p(\vs_t | \va_{1:T}, \vs_{1:t-1}, \vo_{1: t-1}) p(\vo_t | \va_{1:T}, \vs_{1:t-1}, \vo_{1: t-1})} \right ] d\vs_t d\vo_t d\tau
\end{align*}
With the density of multi-view trajectories~\Eqref{eq: multi-view sequences}, we can observe that $\vo_t$ and $\vs_t$ are generated by $p(\vo_t | \vs_t) p(\vs_t | \vs_{t-1}, \va_{t-1}) = p(\vs_t, \vo_t | \vs_{t-1}, \va_{t-1})$. Thus, we have:
\begin{align*}
    p(\vs_t, \vo_t | \va_{1:T}, \vs_{1:t-1}, \vo_{1: t-1}) &= p(\vs_t, \vo_t | \vs_{t-1}, \va_{t-1}) \\
    p(\vs_t | \va_{1:T}, \vs_{1:t-1}, \vo_{1: t-1}) &= p(\vs_t | \vs_{t-1}, \va_{t-1}) \\
    p(\vo_t | \va_{1:T}, \vs_{1:t-1}, \vo_{1: t-1}) &= p(\vo_t | \vs_{t-1}, \va_{t-1})
\end{align*}
This in turn implies that:
\begin{align*}
    I(S_t; O_t | A_{1:T}, S_{1:t-1}, O_{1: t-1}) = I(S_t; O_t | S_{t-1}, A_{t-1}) 
\end{align*}
\end{proof}

As a result, we have a lower bound of $I(S_{1:T}^{(1)} ; O_{1:T}^{(1)} | A_{1:T})$:
\begin{align*}
    I(S_{1:T}^{(1)} ; O_{1:T}^{(1)} | A_{1:T}) \ge  \sum_t (I(S_t^{(1)} ; O_t^{(1)} | S_{t - 1}^{(1)}, A_{t - 1}, O_t^{(2)})
    {+} I(O_t^{(2)};S_t^{(1)} | S_{t - 1}^{(1)}, A_{t - 1})) \nonumber
\end{align*}

With the information bottleneck principle, we have losses in~\Eqref{eq::IB-1} and~\Eqref{eq::IB-2} to compress away task-irrelevant information in representations while maximizing the mutual information between representations and observations.

\subsection{Sufficient Representations in RL}
In this section, we first present the sufficiency condition for sequential data. Then, we prove that if the sufficiency condition on the sequential data holds, then the sufficiency condition on each corresponding individual representation and observation holds as well.
\begin{appendix_thm}
    Let $O_{1:T}$ and $A^*_{1:T}$ be random variables with joint distribution $p(\vo_{1:T}, \va^*_{1:T})$. Let $S_{1:T}$ be the representation of $O_{1:T}$, then $S_{1:T}$ is sufficient for $A^*_{1:T}$  if and only if $I(O_{1:T}; A^*_{1:T})=I(S_{1:T}; A^*_{1:T})$. Also, $S_t$ is a sufficient representation of $O_t$ since $I(O_t; A^*_t | S_t, S_{t-1}, A_{t-1}) = 0$.
    
    Hypothesis:
    \begin{enumerate}[label=\textbf{(H.\arabic*)}]
    \setlength{\itemindent}{1em}
        \item \label{h:sufficiency} $S_{1:T}$ is a sequence of sufficient representations for $O_{1:T}$: $$I(O_{1:T}; A^*_{1:T} | S_{1:T}) = 0$$
    \end{enumerate}
    \label{prop:sufficiency}
\end{appendix_thm}
\begin{proof}
\begin{align*}
    &I(O_{1:T}; A^*_{1:T} | S_{1:T}) \\ 
    \overset{\ref{p:multi_chain}}{=} &I(O_{1:T}; A^*_{1:T}) - I(O_{1:T}; A^*_{1:T}; S_{1:T}) \\
    \overset{\ref{p:multi_chain}}{=} &I(O_{1:T}; A^*_{1:T}) - I(A^*_{1:T}; S_{1:T}) - I(A^*_{1:T}; S_{1:T} | O_{1:T})
\end{align*}
With $S_{1:T}$ as a representation of $O_{1:T}$, we have $I(S_{1:T};A^*_{1:T} | O_{1:T}) {=} 0$. The reason is that $O_{1:T}$ shares the same level of information as $A^*_{1:T}$ and $S_{1:T}$. Then,
\begin{align}
    I(O_{1:T}; A^*_{1:T} | S_{1:T}) 
    = &I(O_{1:T}; A^*_{1:T}) - I(A^*_{1:T}; S_{1:T})
    \label{eq:sufficiency}
\end{align}
So the sufficiency condition $I(O_{1:T}; A^*_{1:T} | S_{1:T}) = 0$ holds if and only if $I(O_{1:T}; A^*_{1:T}) = I(A^*_{1:T}; S_{1:T})$.

We factorize the mutual information between sequential observations and optimal actions
\begin{align*}
    &I(O_{1:t}; A^*_{1:t}) \\
    \overset{\ref{p:chain}}{=}
    & I(O_t; A^*_{1:t} | O_{1:t-1}) + I(O_{1:t-1};A^*_{1:t}) \\
    \overset{\ref{h:sufficiency}}{=}
    & I(O_t; A^*_{1:t} | O_{1:t-1}) + I(S_{1:t-1};A^*_{1:t}) \\
    &\text{}\\
    & I(S_{1:t}; A^*_{1:t}) \\
    \overset{\ref{p:chain}}{=}
    & I(S_t; A^*_{1:t} | S_{1:t-1}) + I(S_{1:t-1};A^*_{1:t}) \\
    \overset{\ref{h:sufficiency}}{=}
    & I(S_t; A^*_{1:t} | S_{1:t-1}) + I(O_{1:t-1};A^*_{1:t})
\end{align*}
Then we obtain the following relation:
\begin{align}
    I(O_t; A^*_{1:t} | O_{1:t-1}) = I(S_t; A^*_{1:t} | S_{1:t-1})
    \label{eq:equality}
\end{align}
We also have
\begin{align*}
    &I(O_t; A^*_{1:t} | O_{1:t-1}) \\
    \overset{\ref{p:chain}}{=}
    & I(O_{1:t}; A^*_{1:t}) - I(O_{1:t-1}; A_{1:t}^*) \\
    \overset{\ref{p:chain}}{=}
    & I(O_{1:t-1}; A^*_{1:t} | O_t) + I(O_t; A^*_{1:t}) \\
    & - I(O_{1:t-1}; A_{1:t}^*) \\
    \overset{\ref{h:sufficiency}}{=}
    & I(S_{1:t-1}; A^*_{1:t} | O_t) + I(O_t; A^*_{1:t}) \\
    & - I(S_{1:t-1}; A_{1:t}^*) \\
    \overset{\ref{p:chain}}{=}
    & I(O_t S_{1:t-1}; A^*_{1:t}) - I(S_{1:t-1}; A_{1:t}^*) \\
    \overset{\ref{p:chain}}{=}
    & I(O_t; A^*_{1:t} | S_{1:t-1}) \\
    \overset{\text{\Eqref{eq:equality}}}{=}
    &I(S_t; A^*_{1:t} | S_{1:t-1}) \\
    \overset{\ref{p:chain}}{\Longleftrightarrow} & \\
    & I(O_t; A^*_{t} |A^*_{1:t-1} , S_{1:t-1}) + I(O_t; A^*_{1:t-1} | S_{1:t-1}) \\
    =
    &I(S_t; A^*_{t} |A^*_{1:t-1} , S_{1:t-1}) + I(S_t; A^*_{1:t-1} | S_{1:t-1}) \\
    \overset{\text{\Eqref{eq:equality}}}{\Longleftrightarrow} & \\
    & I(O_t; A^*_{t} |A^*_{1:t-1} , S_{1:t-1}) {=} I(S_t; A^*_{t} |A^*_{1:t-1} , S_{1:t-1}) \\
    \overset{\text{\Eqref{eq:sufficiency}}}{\Longleftrightarrow} & \\
    &I(O_t, A^*_t | S_t, S_{1: t-1}, A^*_{1:t-1}) = 0
\end{align*}
With the above derivation and Markov property, we have $I(O_t; A^*_t | S_t, S_{t-1}, A_{t-1}) = 0$. We can generalize $A^*_{t - 1}$ to any $A_{t-1}$ by assuming $A^*_t$ as the optimal action for state $S_t$ whose last timestep state-action pair is $(S_{t-1}, A_{t-1})$. Thus, we have $S_t$ is a sufficient representation for $O_t$ if and only if $S_{1:T}$ is a sufficient representation of $O_{1:T}$.
\end{proof}

\subsection{Multi-View Redundancy and Sufficiency}

\begin{proposition}
    $O_{1:T}^{(1)}$ is a redundant view with respect to $O_{1:T}^{(2)}$ to obtain $A^*_{1:T}$ if only if $I(O_{1:T}^{(1)}; A_{1:T}^* | O_{1:T}^{(2)}) = 0$. Any representation $S_{1:T}^{(1)}$ of $O_{1:T}^{(1)}$ that is sufficient for $O_{1:T}^{(2)}$ is also sufficient for $A^*_{1:T}$.
    \label{prop:mutli-view sufficiency}
\end{proposition}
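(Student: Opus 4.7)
The first statement is a direct rewriting of what it means for $\vo_{1:T}^{(1)}$ to be redundant with respect to $\vo_{1:T}^{(2)}$: one view is redundant for the task precisely when, conditioned on the other, it carries no additional information about $\va_{1:T}^{*}$. So I will take $I(\vo_{1:T}^{(1)}; \va_{1:T}^{*} \mid \vo_{1:T}^{(2)}) = 0$ as the working characterization and devote the rest of the argument to the nontrivial second statement. The task is to derive $I(\vo_{1:T}^{(1)}; \va_{1:T}^{*} \mid \vs_{1:T}^{(1)}) = 0$ (i.e.\ sufficiency of $\vs_{1:T}^{(1)}$ for $\va_{1:T}^{*}$ in the sense of Definition~\ref{def:sufficiency}) from three ingredients: (a) redundancy, (b) the hypothesis $I(\vo_{1:T}^{(1)}; \vo_{1:T}^{(2)} \mid \vs_{1:T}^{(1)}) = 0$, and (c) the encoder-induced Markov structure $\vs_{1:T}^{(1)} \perp (\va_{1:T}^{*}, \vo_{1:T}^{(2)}) \mid \vo_{1:T}^{(1)}$.

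The plan is to apply the chain rule~\ref{p:chain} twice to the three-variable conditional information $I(\vo_{1:T}^{(1)}; \va_{1:T}^{*}, \vo_{1:T}^{(2)} \mid \vs_{1:T}^{(1)})$. Expanding around $\vo_{1:T}^{(2)}$ first and invoking (b) to kill one summand reduces this quantity to $I(\vo_{1:T}^{(1)}; \va_{1:T}^{*} \mid \vo_{1:T}^{(2)}, \vs_{1:T}^{(1)})$. As an intermediate step I will show this last term is itself zero, by double-expanding $I(\vo_{1:T}^{(1)}, \vs_{1:T}^{(1)}; \va_{1:T}^{*} \mid \vo_{1:T}^{(2)})$: one expansion writes it as $I(\vo_{1:T}^{(1)}; \va_{1:T}^{*} \mid \vo_{1:T}^{(2)}) + I(\vs_{1:T}^{(1)}; \va_{1:T}^{*} \mid \vo_{1:T}^{(1)}, \vo_{1:T}^{(2)})$, which vanishes by (a) and (c); the other writes it as $I(\vs_{1:T}^{(1)}; \va_{1:T}^{*} \mid \vo_{1:T}^{(2)}) + I(\vo_{1:T}^{(1)}; \va_{1:T}^{*} \mid \vs_{1:T}^{(1)}, \vo_{1:T}^{(2)})$, so positivity~\ref{p:positive} forces each non-negative summand to be zero.

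Armed with $I(\vo_{1:T}^{(1)}; \va_{1:T}^{*} \mid \vo_{1:T}^{(2)}, \vs_{1:T}^{(1)}) = 0$, I re-expand the original three-variable quantity in the opposite order to obtain $I(\vo_{1:T}^{(1)}; \va_{1:T}^{*} \mid \vs_{1:T}^{(1)}) + I(\vo_{1:T}^{(1)}; \vo_{1:T}^{(2)} \mid \va_{1:T}^{*}, \vs_{1:T}^{(1)}) = 0$, and positivity forces both summands to vanish. The first is precisely the desired sufficiency condition. The main conceptual subtlety is justifying (c): each $\vs_{t}^{(1)}$ is sampled through the encoder $p_\theta(\vs_{t}^{(1)} \mid \vo_{t}^{(1)}, \vs_{t-1}^{(1)}, \va_{t-1})$, which depends only on the first-view history, so a standard data-processing argument yields the sequence-level independence $\vs_{1:T}^{(1)} \perp (\va_{1:T}^{*}, \vo_{1:T}^{(2)}) \mid \vo_{1:T}^{(1)}$. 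All remaining manipulations are routine uses of \ref{p:positive}--\ref{p:multi_chain} of the same flavor as in Theorem~\ref{thm:true_lower_bound}.
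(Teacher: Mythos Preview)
Your argument is correct. The paper itself does not supply a proof here; it simply points the reader to Proposition~B.3 of the MIB paper~\citep{federici2020learning}, whose proof is precisely the kind of double chain-rule expansion you carry out. In particular, your intermediate step---expanding $I(\vo_{1:T}^{(1)},\vs_{1:T}^{(1)};\va_{1:T}^{*}\mid \vo_{1:T}^{(2)})$ two ways, killing it via redundancy~(a) and the data-processing property~(c), then reading off $I(\vo_{1:T}^{(1)};\va_{1:T}^{*}\mid \vs_{1:T}^{(1)},\vo_{1:T}^{(2)})=0$ by positivity---is exactly the device used there, just lifted to sequences. Your justification of~(c) via the encoder's dependence on first-view history alone is also the right reading of what the paper means by ``$\vs_{1:T}^{(1)}$ is a representation of $\vo_{1:T}^{(1)}$'' (cf.\ the line $I(\vs_{1:T};\va_{1:T}^{*}\mid\vo_{1:T})=0$ in the proof of Theorem~\ref{prop:sufficiency}). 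So you have filled in what the paper chose to outsource, and in the same spirit.
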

\begin{proof}
 See proof of Proposition B.3 in the MIB paper~\citep{federici2020learning}.
\end{proof}

\begin{corollary}
    Let $O_{1:T}^{(1)}$ and $O_{1:T}^{(2)}$ be two mutually redundant views for $A^*_{1:T}$. Let $S_{1:T}^{(1)}$ be a representation of $O_{1:T}^{(1)}$. If $S_{1:T}^{(1)}$ is sufficient for $O_{1:T}^{(2)}$, $S_{t}^{(1)}$ can derive $A^*_{t}$ as the joint observation of the two views ($I(O_t^{(1)}O_t^{(2)};A_t^* | S_{t-1}, A_{t-1}) {=} I(S_t^{(1)}; A_t^* | S_{t-1}, A_{t-1})$), where $S_{t-1} $ is any sufficient representation at timestep $t-1$.
\end{corollary}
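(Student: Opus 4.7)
The plan is to reduce the claimed per-timestep equality to the vanishing of a single conditional mutual information, and then dispatch that quantity with one application of the sufficiency of $\vs_t^{(1)}$ for $\vo_t^{(1)}$ (from Propositions~\ref{prop:sufficiency} and~\ref{prop:mutli-view sufficiency}) and one application of multi-view redundancy. First I would apply chain rule~\ref{p:chain} to $I(\vo_t^{(1)}\vo_t^{(2)}\vs_t^{(1)}; \va_t^* \mid \vs_{t-1}, \va_{t-1})$ expanded in the two natural orders. Because $\vs_t^{(1)}$ is a representation of $\vo_t^{(1)}$ it is conditionally independent of $\va_t^*$ given $\vo_t^{(1)}\vo_t^{(2)}$, so the cross term $I(\vs_t^{(1)}; \va_t^* \mid \vo_t^{(1)}\vo_t^{(2)}, \vs_{t-1}, \va_{t-1})$ vanishes. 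Rearranging yields
\begin{equation*}
I(\vo_t^{(1)}\vo_t^{(2)}; \va_t^* \mid \vs_{t-1}, \va_{t-1}) - I(\vs_t^{(1)}; \va_t^* \mid \vs_{t-1}, \va_{t-1}) = I(\vo_t^{(1)}\vo_t^{(2)}; \va_t^* \mid \vs_t^{(1)}, \vs_{t-1}, \va_{t-1}),
\end{equation*}
so the corollary will follow once the right-hand side is shown to be zero.

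To handle the right-hand side, I would split it via chain rule as
\begin{equation*}
I(\vo_t^{(1)}; \va_t^* \mid \vs_t^{(1)}, \vs_{t-1}, \va_{t-1}) + I(\vo_t^{(2)}; \va_t^* \mid \vo_t^{(1)}, \vs_t^{(1)}, \vs_{t-1}, \va_{t-1}).
\end{equation*}
For the first summand, Proposition~\ref{prop:mutli-view sufficiency} tells us $\vs_{1:T}^{(1)}$ is sufficient for $\va_{1:T}^*$, and Proposition~\ref{prop:sufficiency} then lifts this sequence-level sufficiency to per-timestep sufficiency, giving $I(\vo_t^{(1)}; \va_t^* \mid \vs_t^{(1)}, \vs_{t-1}, \va_{t-1}) = 0$. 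For the second summand, I would start from the mutual redundancy hypothesis $I(\vo_{1:T}^{(2)}; \va^*_{1:T} \mid \vo_{1:T}^{(1)}) = 0$, peel off per-timestep terms via the chain rule~\ref{p:h_chain} to conclude $I(\vo_t^{(2)}; \va_t^* \mid \vo_{1:T}^{(1)}, \va_{1:t-1}) = 0$, and then observe that $\vs_t^{(1)}$, $\vs_{t-1}$, $\va_{t-1}$ are all measurable with respect to $(\vo_{1:T}^{(1)}, \va_{1:t-1})$ (up to representation noise that is conditionally independent of $\va_t^*$ given those variables), so the conditional independence is preserved under further conditioning.

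The main obstacle is the last point: showing that adding $\vs_t^{(1)}, \vs_{t-1}, \va_{t-1}$ to the conditioning set cannot reintroduce mutual information between $\vo_t^{(2)}$ and $\va_t^*$. Conditioning can in general increase mutual information, so the argument must lean on the Markov structure of the multi-view trajectory density in~\Eqref{eq: multi-view sequences}: once $\vo_{1:T}^{(1)}$ is conditioned on, the auxiliary variables we are adding are either deterministic functions of conditioning variables (so they contribute nothing) or, like the representation noise, independent of $\va_t^*$ given those variables. I would make this precise by expressing the joint distribution factorization and verifying the appropriate conditional independence graphically, after which both summands vanish and the chain of equalities closes the proof.
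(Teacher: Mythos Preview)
Your chain-rule reduction to showing $I(\vo_t^{(1)}\vo_t^{(2)}; \va_t^* \mid \vs_t^{(1)}, \vs_{t-1}, \va_{t-1}) = 0$ is sound, and the first summand is handled correctly via Proposition~\ref{prop:mutli-view sufficiency} together with Theorem~\ref{prop:sufficiency}. The paper, however, takes a different and more modular route: it first cites Corollary~B.2.1 of~\citet{federici2020learning} at the sequence level to obtain $I(\vo_{1:T}^{(1)}\vo_{1:T}^{(2)}; \va_{1:T}^{*}) = I(\vs_{1:T}^{(1)};\va_{1:T}^*)$, then invokes Theorem~\ref{prop:sufficiency} to pass from sequence-level sufficiency of $\vs_{1:T}^{(1)}$ \emph{for $\vo_{1:T}^{(2)}$} (not for $\va_{1:T}^*$) to per-timestep sufficiency of $\vs_t^{(1)}$ for $\vo_t^{(2)}$, and finally re-applies the MIB corollary at the per-timestep level, everything conditioned on $(\vs_{t-1}, \va_{t-1})$. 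The payoff is that per-timestep redundancy and per-timestep sufficiency-for-$\vo_t^{(2)}$ are established as separate lemmas (Theorems~\ref{prop:md} and~\ref{prop:sufficiency}), after which the MIB result delivers the equality as a black box; no ad hoc graphical verification is needed.

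Your second-summand argument is where the gap sits. You propose to peel off $I(\vo_t^{(2)}; \va_t^* \mid \vo_{1:T}^{(1)}, \va_{1:t-1}) = 0$ from sequence-level redundancy and then argue that ``conditional independence is preserved under further conditioning'' upon adding $\vs_t^{(1)}, \vs_{t-1}, \va_{t-1}$. But the target conditions on $\vo_t^{(1)}$ alone, not $\vo_{1:T}^{(1)}$: you are not \emph{adding} conditioning, you are \emph{replacing} a larger conditioning set by a smaller one plus some representations, and that move cannot be justified by a further-conditioning argument (which is false in general anyway, e.g.\ colliders). What you actually need is the per-timestep redundancy $I(\vo_t^{(2)}; \va_t^* \mid \vo_t^{(1)}, \vs_{t-1}, \va_{t-1}) = 0$. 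The clean way to get it is the same factorize-then-Markov device already used in Theorem~\ref{prop:sufficiency}: reduce the sequence-level conditioning to past sufficient representations and then invoke the trajectory density~\Eqref{eq: multi-view sequences} to drop all history except $(\vs_{t-1}, \va_{t-1})$. That is precisely the content of Theorem~\ref{prop:md}; once you have it, and note that $\vs_t^{(1)}$ is a (possibly noisy) function of $\vo_t^{(1)}$, your second summand vanishes and the proof closes.
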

\begin{proof}
    For the sequential data, see proof of Corollary B.2.1 in the MIB paper~\citep{federici2020learning} to prove
    \begin{align*}
        I(O_{1:T}^{(1)}O_{1:T}^{(2)}; A_{1:T}^{*}) {=} I(S_{1:T}^{(1)};A_{1:T}^*)
    \end{align*}
    According to Theorem~\ref{prop:sufficiency}, if $S_{1:T}^{(1)}$ is a sufficient representation of $O_{1:T}^{(2)}$, $S_t^{(1)}$ is a sufficient representation of $O_t^{(2)}$. Similar to proof on sequential data, we can use Corollary B.2.1 in the MIB paper~\citep{federici2020learning} to show that
    \begin{align*}
        I(O_t^{(1)}O_t^{(2)};A_t^* | S_{t-1}, A_{t-1}) = I(S_t^{(1)}; A_t^* | S_{t-1}, A_{t-1})
    \end{align*}
\end{proof}

\begin{appendix_thm}
    Let the two views $\vo_{1:T}^{(1)}$ and $\vo_{1:T}^{(2)}$ of observation $\vo_{1:T}$ are obtained by data augmentation transformation sequences $t^{(1)}_{1:T}$ and $t^{(2)}_{1:T}$ respectively ($\vo^{(1)}_{1:T} {=} t^{(1)}_{1:T} (\vo_{1:T})$ and $\vo^{(2)}_{1:T} {=} t^{(2)}_{1:T} (\vo_{1:T})$). We abuse the notation $t$ for simplicity to represent $t^{(1)}_{1:T}(O_{1:T})$ and $t^{(2)}_{1:T}(O_{1:T})$ as random variables for augmented observations.
    Whenever $I(t^{(1)}_{1:T}(O_{1:T}); A^*_{1:T}){=} I(t^{(2)}_{1:T}(O_{1:T}); A^*_{1:T}) {=} I(O_{1:T}; A^*_{1:T})$, the two views $O_{1:T}^{(1)}$ and $O_{1:T}^{(2)}$ must be mutually redundant for $A^*_{1:T}$. Besides, the two views $O_{t}^{(1)}$ and $O_{t}^{(2)}$ must be mutually redundant for $A^*_{t}$.
    \label{prop:md}
\end{appendix_thm}
\begin{proof}
    Let $\vs_{1:T}$ be a sufficient representation for both original and multi-view observations. We first factorize the mutual information and refer \ref{prop:sufficiency} as Theorem~\ref{prop:sufficiency}.
    \begin{align*}
        & I(t^{(1)}_{1:t}(O_{1:t}); A^*_{1:t})
        = I(O^{(1)}_{1:t}; A^*_{1:t}) \\
        \overset{\ref{p:chain}}{=}
        & I(O^{(1)}_{t}; A^*_{1:t} | O^{(1)}_{1:t-1}) {+} I(O^{(1)}_{1:t-1}; A^*_{1:t}) \\
        \overset{\ref{prop:sufficiency}}{=}
        & I(O_t^{(1)}; A^*_{1:t} | S_{1:t-1}) {+} I(S_{1:t-1}; A^*_{1:t}) \\
        &\text{} \\
        & I(t^{(2)}_{1:t}(O_{1:t}); A^*_{1:t})
        = I(O^{(2)}_{1:t}; A^*_{1:t}) \\
        \overset{\ref{p:chain}}{=}
        & I(O^{(2)}_{t}; A^*_{1:t} | O^{(2)}_{1:t-1}) {+} I(O^{(2)}_{1:t-1}; A^*_{1:t}) \\
        \overset{\ref{prop:sufficiency}}{=}
        & I(O_t^{(2)}; A^*_{1:t} | S_{1:t-1}) {+} I(S_{1:t-1}; A^*_{1:t}) \\
        &\text{} \\
        & I(O_{1:t}; A^*_{1:t})
        = I(O_{1:t}; A^*_{1:t}) \\
        \overset{\ref{p:chain}}{=}
        & I(O_{t}; A^*_{1:t} | O_{1:t-1}) {+} I(O_{1:t-1}; A^*_{1:t}) \\
        \overset{\ref{prop:sufficiency}}{=}
        & I(O_t; A^*_{1:t} | S_{1:t-1}) {+} I(S_{1:t-1}; A^*_{1:t})
    \end{align*}
    Then, we have the following equality
    \begin{align*}
        I(O_t^{(1)}; A^*_{1:t} | S_{1:t-1}) {=} I(O_t^{(2)}; A^*_{1:t} | S_{1:t-1}) {=} I(O_t; A^*_{1:t} | S_{1:t-1})
    \end{align*}
    Similar as derivation in Theorem~\ref{prop:sufficiency}
    \begin{align*}
        &I(O_t^{(1)}; A^*_{1:t} | S_{1:t-1}) \\
        \overset{\ref{p:chain}}{=} &I(O_t^{(1)}; A^*_t | A_{1:t-1}^*, S_{1:t-1}) + I(O_t^{(1)}; A_{1:t-1}^* | S_{1:t-1}) \\
        \overset{\text{\Eqref{eq:equality}}}{=}
        &I(O_t^{(1)}; A^*_t | A_{1:t-1}^*, S_{1:t-1}) + I(S_t; A_{1:t-1}^* | S_{1:t-1})
    \end{align*}
    We apply the same derivation for $\vo^{(2)}$ and $\vo$, we have the following with Markov property
    \begin{align*}
        &I(t_t^{(1)}(O_t); A^*_t | S_{t-1}, A_{t-1}) \\
        =& I(t_t^{(2)}(O_t); A^*_t | S_{t-1}, A_{t-1}) \\
        =& I(O_t; A^*_t | S_{t-1}, A_{t-1})
    \end{align*}
    We show that the condition on sequential data can be expressed at each timestep with the similar form.
    See proof of Proposition B.4 in the MIB paper~\citep{federici2020learning} for mutual redundancy between sequential views and individual pairs of views.
\end{proof}

\begin{appendix_thm}
    Suppose the mutually redundant condition holds, i.e. $I(t^{(1)}_{1:T}(O_{1:T}); A^*_{1:T})= I(t^{(2)}_{1:T}(O_{1:T}); A^*_{1:T}) = I(O_{1:T}; A^*_{1:T})$. If $S^{(1)}_{1:T}$ is a sufficient representation for $t_{1:T}^{(2)}(O_{1:T})$ then $I(O_t; A^*_t | S_{t - 1}, A_{t - 1}) = I(S_t^{(1)}; A^*_t |  S_{t - 1}, A_{t - 1})$.
\end{appendix_thm}
\begin{proof}
    Since $t_{1:T}^{(1)}(O_{1:T})$ is redundant for $t_{1:T}^{(2)}(O_{1:T})$ (Theorem~\ref{prop:md}), any representation $S_t^{(1)}$ of $t_{1:T}^{(1)}(O_{1:T})$ that is sufficient for $t_{1:T}^{(2)}(O_{1:T})$ must also be sufficient for $A^*_t$ (Theorem~\ref{prop:sufficiency} and Proposition~\ref{prop:mutli-view sufficiency}). Using Theorem~\ref{prop:sufficiency} we have $I(S_t^{(1)}; A^*_t | S_{t-1}, A_{t-1}) {=} I(t_t^{(1)}(O_t); A^*_t | S_{t-1}, A_{t-1})$. With $I(t_t^{(1)}(O_t); A^*_t | S_{t-1}, A_{t-1}) = I(O_t; A^*_t | S_{t-1}, A_{t-1})$, we conclude $I(O_t; A^*_t | S_{t - 1}, A_{t - 1}) = I(S_t^{(1)}; A^*_t |  S_{t - 1}, A_{t - 1})$.
\end{proof}

We finally show the proposition for the Multi-Information Bottleneck principle in RL with the generalization of sufficiency and mutually redundancy condition from sequential data to each individual pairs of data.
\begin{proposition}
    Let $O_t^{(1)}$ and $O_t^{(2)}$ be mutually redundant views for $A^*_t$ that share only optimal action information. Then a sufficient representation of $S_t^{(1)}$ of $O_t^{{1}}$ for $O_t^{(2)}$ that is minimal for $O_t^{(2)}$ is also a minimal representation for $A^*_t$.
\end{proposition}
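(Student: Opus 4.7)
The plan is to reduce both minimality conditions to statements about mutual information and then chain them together using sufficiency, mutual redundancy, and the hypothesis that the two views share only optimal-action information. Concretely, by the information-bottleneck view adopted throughout the paper, ``$\vs_t^{(1)}$ is minimal for $\vo_t^{(2)}$'' unpacks as $I(\vs_t^{(1)};\vo_t^{(1)}\mid\vo_t^{(2)})=0$, while the conclusion we want, ``$\vs_t^{(1)}$ is minimal for $\va^*_t$'', unpacks as $I(\vs_t^{(1)};\vo_t^{(1)}\mid\va^*_t)=0$. So it suffices to show the latter from the former together with the three given properties: sufficiency of $\vs_t^{(1)}$ for $\vo_t^{(2)}$, mutual redundancy of the two views for $\va^*_t$, and the assumption $I(\vo_t^{(1)};\vo_t^{(2)}\mid\va^*_t)=0$.

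First I would collect three equalities of scalar mutual informations. (i) Since $\vs_t^{(1)}$ is a (stochastic) function of $\vo_t^{(1)}$, the Markov relation $\vs_t^{(1)}\leftarrow\vo_t^{(1)}\to\vo_t^{(2)}$ plus sufficiency for $\vo_t^{(2)}$ ($I(\vo_t^{(1)};\vo_t^{(2)}\mid\vs_t^{(1)})=0$) gives $I(\vs_t^{(1)};\vo_t^{(2)})=I(\vo_t^{(1)};\vo_t^{(2)})$ via \ref{p:multi_chain}. (ii) From the hypothesis $I(\vo_t^{(1)};\vo_t^{(2)}\mid\va^*_t)=0$ and the mutual-redundancy consequence $I(\vo_t^{(1)};\va^*_t\mid\vo_t^{(2)})=0$ (stated earlier in Proposition~\ref{prop:mutli-view sufficiency} and Theorem~\ref{prop:md}), two applications of \ref{p:multi_chain} on $I(\vo_t^{(1)};\vo_t^{(2)};\va^*_t)$ yield $I(\vo_t^{(1)};\vo_t^{(2)})=I(\vo_t^{(1)};\va^*_t)$. (iii) Because mutual redundancy plus sufficiency for $\vo_t^{(2)}$ also makes $\vs_t^{(1)}$ sufficient for $\va^*_t$ (the corollary just above this proposition), Definition~\ref{def:sufficiency} gives $I(\vs_t^{(1)};\va^*_t)=I(\vo_t^{(1)};\va^*_t)$.

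Next I would apply the chain rule \ref{p:chain} to $I(\vs_t^{(1)};\vo_t^{(1)},\vo_t^{(2)})$ in two different orders. Using that $I(\vs_t^{(1)};\vo_t^{(2)}\mid\vo_t^{(1)})=0$ (again by the Markov property), one ordering gives
\begin{equation*}
I(\vs_t^{(1)};\vo_t^{(1)})=I(\vs_t^{(1)};\vo_t^{(2)})+I(\vs_t^{(1)};\vo_t^{(1)}\mid\vo_t^{(2)}),
\end{equation*}
and the given minimality $I(\vs_t^{(1)};\vo_t^{(1)}\mid\vo_t^{(2)})=0$ together with equality (i) collapses this to $I(\vs_t^{(1)};\vo_t^{(1)})=I(\vo_t^{(1)};\vo_t^{(2)})$. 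Doing the analogous chain-rule decomposition through $\va^*_t$, and invoking $I(\vs_t^{(1)};\va^*_t\mid\vo_t^{(1)})=0$ (since $\vs_t^{(1)}$ is a representation of $\vo_t^{(1)}$), gives
\begin{equation*}
I(\vs_t^{(1)};\vo_t^{(1)})=I(\vs_t^{(1)};\va^*_t)+I(\vs_t^{(1)};\vo_t^{(1)}\mid\va^*_t).
\end{equation*}
Substituting equalities (ii) and (iii), the left-hand side and the first term on the right become $I(\vo_t^{(1)};\va^*_t)$, forcing $I(\vs_t^{(1)};\vo_t^{(1)}\mid\va^*_t)=0$, which is exactly the minimality of $\vs_t^{(1)}$ for $\va^*_t$.

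The only real obstacle I anticipate is bookkeeping around the multivariate mutual information step: the ``shares only optimal action information'' hypothesis must be combined cleanly with the mutual-redundancy identities already proven in the appendix to conclude $I(\vo_t^{(1)};\vo_t^{(2)})=I(\vo_t^{(1)};\va^*_t)$. Once that equality is pinned down, everything else is a routine chain-rule substitution; no additional inequalities, and in particular no conditioning on $(\vs_{t-1},\va_{t-1})$, are needed beyond what is already packaged in the preceding propositions.
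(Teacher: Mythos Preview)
Your argument is correct. The paper itself does not give a proof here; it simply defers to Proposition~E.1 of \citet{federici2020learning}, and your chain-rule derivation is exactly the standard MIB argument that reference carries out: use sufficiency and the Markov structure to get $I(\vs_t^{(1)};\vo_t^{(1)})=I(\vo_t^{(1)};\vo_t^{(2)})$, use mutual redundancy together with the ``share only optimal-action information'' hypothesis to get $I(\vo_t^{(1)};\vo_t^{(2)})=I(\vo_t^{(1)};\va^*_t)$, and then subtract $I(\vs_t^{(1)};\va^*_t)=I(\vo_t^{(1)};\va^*_t)$ to force $I(\vs_t^{(1)};\vo_t^{(1)}\mid\va^*_t)=0$. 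One small bookkeeping remark: the result you call ``the corollary just above this proposition'' for step~(iii) is actually Proposition~\ref{prop:mutli-view sufficiency} (the corollary is its conditioned variant), and the single-step form of Definition~\ref{def:sufficiency} you use is justified in this paper by Theorem~\ref{prop:sufficiency}; neither affects the validity of your argument.
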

\begin{proof}
    See proof of Proposition E.1 in the MIB paper~\citep{federici2020learning}.
\end{proof}

\section{\methodname Loss Computation}
\label{appendix:loss_computation}
We consider the average of the information bottleneck losses from the two views.
\begin{align}
    &\gL_{\frac{1{+}2}{2}} \\
    {=} &\frac{I(S_t^{(1)}{;} O_t^{(1)}|S_{t{-}1}^{(1)}{,} A_{t{-}1}{,} O_t^{(2)}) {+} I(S_t^{(2)}{;} O_t^{(2)}|S_{t{-}1}^{(2)}{,} A_{t{-}1}{,} O_t^{(1)})}{2}
    \nonumber \\
    &{-}
    \frac{\lambda_1 I(S_t^{(1)};O_t^{(2)} | S_{t-1}^{(1)}, A_{t-1}) {+} \lambda_2 I(S_t^{(2)};O_t^{(1)} | S_{t-1}^{(2)}, A_{t-1})}{2}
\end{align}

Consider $\vs_t^{(1)}$ and $\vs_t^{(2)}$ on the same domain $\sS$, $I(S_t^{(1)}; O_t^{(1)}|S_{t-1}^{(1)}, A_{t-1}, O_t^{(2)})$ can be expressed as:

\begin{align}
    &I(S_t^{(1)}; O_t^{(1)}|S_{t-1}^{(1)}, A_{t-1}, O_t^{(2)}) \nonumber \\
    &{=}
    \E \left [
    \log \frac{
    p_\theta (\vs_t^{(1)} | \vo_t^{(1)}, \vs_{t - 1}^{(1)}, \va_{t - 1})}{
    p_\theta (\vs_t^{(1)} | \vo_t^{(2)}, \vs_{t - 1}^{(1)}, \va_{t - 1})}
    \right ]
    \nonumber \\
    &{=}
    \E \left [
    \log \frac{
    p_\theta (\vs_t^{(1)} | \vo_t^{(1)}, \vs_{t - 1}^{(1)}, \va_{t - 1})}{
    p_\theta (\vs_t^{(2)} | \vo_t^{(2)}, \vs_{t - 1}^{(2)}, \va_{t - 1})}
    \frac{
    p_\theta (\vs_t^{(2)} | \vo_t^{(2)}, \vs_{t - 1}^{(2)}, \va_{t - 1})
    }{
    p_\theta (\vs_t^{(1)} | \vo_t^{(2)}, \vs_{t - 1}^{(1)}, \va_{t - 1})}
    \right ]
    \nonumber \\
    &{=}
    D_{\text{KL}}(p_\theta (\vs_t^{(1)} | \vo_t^{(1)}, \vs_{t - 1}^{(1)}, \va_{t - 1}) || p_\theta (\vs_t^{(2)} | \vo_t^{(2)}, \vs_{t - 1}^{(2)}, \va_{t - 1}))
    \nonumber \\
    & {-}
    D_{\text{KL}}(p_\theta (\vs_t^{(1)} | \vo_t^{(2)}, \vs_{t - 1}^{(1)}, \va_{t - 1}) ||
    p_\theta (\vs_t^{(2)} | \vo_t^{(2)}, \vs_{t - 1}^{(2)}, \va_{t - 1}))
    \nonumber \\
    &{\le}
    D_{\text{KL}}(p_\theta (\vs_t^{(1)} | \vo_t^{(1)}, \vs_{t - 1}^{(1)}, \va_{t - 1}) || p_\theta (\vs_t^{(2)} | \vo_t^{(2)}, \vs_{t - 1}^{(2)}, \va_{t - 1}))
\end{align}

Note that equality holds if the two distributions coincide. Analogously $I(S_t^{(2)}; O_t^{(2)}|S_{t-1}^{(2)}, A_{t-1}, O_t^{(1)})$ is upper bounded by
$D_{\text{KL}}(p_\theta (\vs_t^{(2)} | \vo_t^{(2)}, \vs_{t - 1}^{(2)}, \va_{t - 1}) || p_\theta (\vs_t^{(1)} | \vo_t^{(1)}, \vs_{t - 1}^{(1)}, \va_{t - 1}))$.

Assume $S_{t-1}$ is a sufficient representation of $O_{t-1}$. Then, $S_{t-1}^{(1)}$ provides task-relevant information no more than the sufficient representation $S_{t-1}$. $I(S_t^{(1)};O_t^{(2)} | S_{t-1}^{(1)}, A_{t-1})$ can be thus re-expressed as:

\begin{align}
    &I(S_t^{(1)};O_t^{(2)} | S_{t-1}^{(1)}, A_{t-1}) \nonumber \\
    \ge
    &I(S_t^{(1)};O_t^{(2)} | S_{t-1}, A_{t-1}) \nonumber \\
    \overset{\ref{p:chain}}{=} &I(S_t^{(1)};S_t^{(2)}O_t^{(2)} | S_{t-1}, A_{t-1}) {-} I(S_t^{(1)};S_t^{(2)} | O_{t}^{(2)}, S_{t-1}, A_{t-1})
    \nonumber \\
    \overset{*}{=}&
    I(S_t^{(1)};S_t^{(2)}O_t^{(2)} | S_{t-1}, A_{t-1})
    \nonumber \\
    {=}&
    I(S_t^{(1)};S_t^{(2)} | S_{t-1}, A_{t-1}) +
    I(S_t^{(1)};O_t^{(2)} |S_t^{(2)}, S_{t-1}, A_{t-1})
    \nonumber \\
    {\ge}&
    I(S_t^{(1)};S_t^{(2)} | S_{t-1}, A_{t-1})
\end{align}

Where $*$ follows from $S_t^{(2)}$ being the representation of $O_t^{(2)}$. The bound is tight whenever $S_t^{(2)}$ is sufficient from $S_t^{(1)}$ $(I(S_t^{(1)} ; O_t^{(2)} | S_{t}^{(2)}, S_{t - 1}, A_{t-1}) {=} 0)$. This happens whenever $S_t^{(2)}$ contains all the information regarding $S_t^{(1)}$. Once again, we can have $I(S_t^{(2)};O_t^{(1)} | S_{t-1}^{(2)}, A_{t-1}) \ge I(S_t^{(1)};S_t^{(2)} | S_{t-1}, A_{t-1})$. Therefore, the averaged loss functions can be upper-bounded by

\begin{align}
    &\gL_{\frac{1+2}{2}} {\le}{-} \frac{\lambda_1 + \lambda_2}{2}I(S_t^{(1)};S_t^{(2)} | S_{t-1}, A_{t-1}) \\
    &{+}
    D_{\text{SKL}}(p_\theta (\vs_t^{(1)} | \vo_t^{(1)}{,} \vs_{t {-} 1}^{(1)}{,} \va_{t {-} 1}) {||} p_\theta (\vs_t^{(2)} | \vo_t^{(2)}, \vs_{t {-} 1}^{(2)}{,} \va_{t {-} 1})) \nonumber
\end{align}

Lastly, by re-parametrizing the objective, we obtain:

\begin{align}
    &\gL(\theta;\beta) {=} - I_\theta(S_t^{(1)}; S_t^{(2)} | S_{t - 1}, A_{t - 1}) 
    \label{eq:averaged_loss} \\
    & {+} \beta D_{\text{SKL}} (p_\theta (\vs_t^{(1)} | \vo_t^{(1)}{,} \vs_{t {-} 1}^{(1)}{,} \va_{t {-} 1}) || p_\theta (\vs_t^{(2)} | \vo_t^{(2)}{,} \vs_{t {-} 1}^{(2)}{,} \va_{t {-} 1})) \nonumber
\end{align}

In~\Algref{algorithm:DRMIB loss}, we use $\vs_t^{(1)} \sim p_\theta(\vs_t^{(1)}|\vo_t^{(1)}, \vs_{t-1}^{(1)}, \va_{t-1})$ and $\vs_t^{(2)} \sim p_\theta(\vs_t^{(2)}|\vo_t^{(2)}, \vs_{t-1}^{(2)}, \va_{t-1})$ to obtain representations for multi-view observations. We argue that the substitution does not affect the effectiveness of the averaged objective. With the multi-view assumption, we have that representations $\vs_{t-1}^{(1)}$ and $\vs_{t-1}^{(2)}$ do not share any task-irrelevant information. So, the representations at timestep $t$ conditioned on them do not share any task-irrelevant information. Maximizing the mutual information between $\vs_{t}^{(1)}$ and $\vs_{t}^{(2)}$ (first term in~\Eqref{eq:averaged_loss}) will encourage the representations to share maximal task-relevant information. Similar argument also works for the second term in~\Eqref{eq:averaged_loss}. Since $\vs_{t-1}^{(1)}$ and $\vs_{t-1}^{(2)}$ do not share any task-irrelevant information, any task-irrelevant information introduced from the conditional probability will be also identified as task-irrelevant information by KL divergence, which will be reduced through minimizing the \methodname loss.

\section{Additional Results}
\label{appendix:add_results}

\subsection{Additional DMC Results}
\label{appendix:add_dmc_results}
For clean setting, the pixel observations have simple backgrounds as shown in~\Figref{fig:pixel_observations} (left column). \Figref{fig:full-clean-setting} shows that RAD, SLAC, CURL and PI-SAC generally perform the best, whereas \methodname consistently outperforms DBC and matches SOTA.

For natural video setting, \Figref{fig:full-natural-setting} shows that \methodname performs substantially better than RAD, SLAC and CURL which do not discard task-irrelevant information explicitly. Compared to PI-SAC and DBC, recent state-of-art methods that aim at learning representations that are invariant to task-irrelevant information, \methodname outperforms them consistently.
\begin{figure}[!ht]
    \centering
    \subfloat{	\includegraphics[width=.7\textwidth]{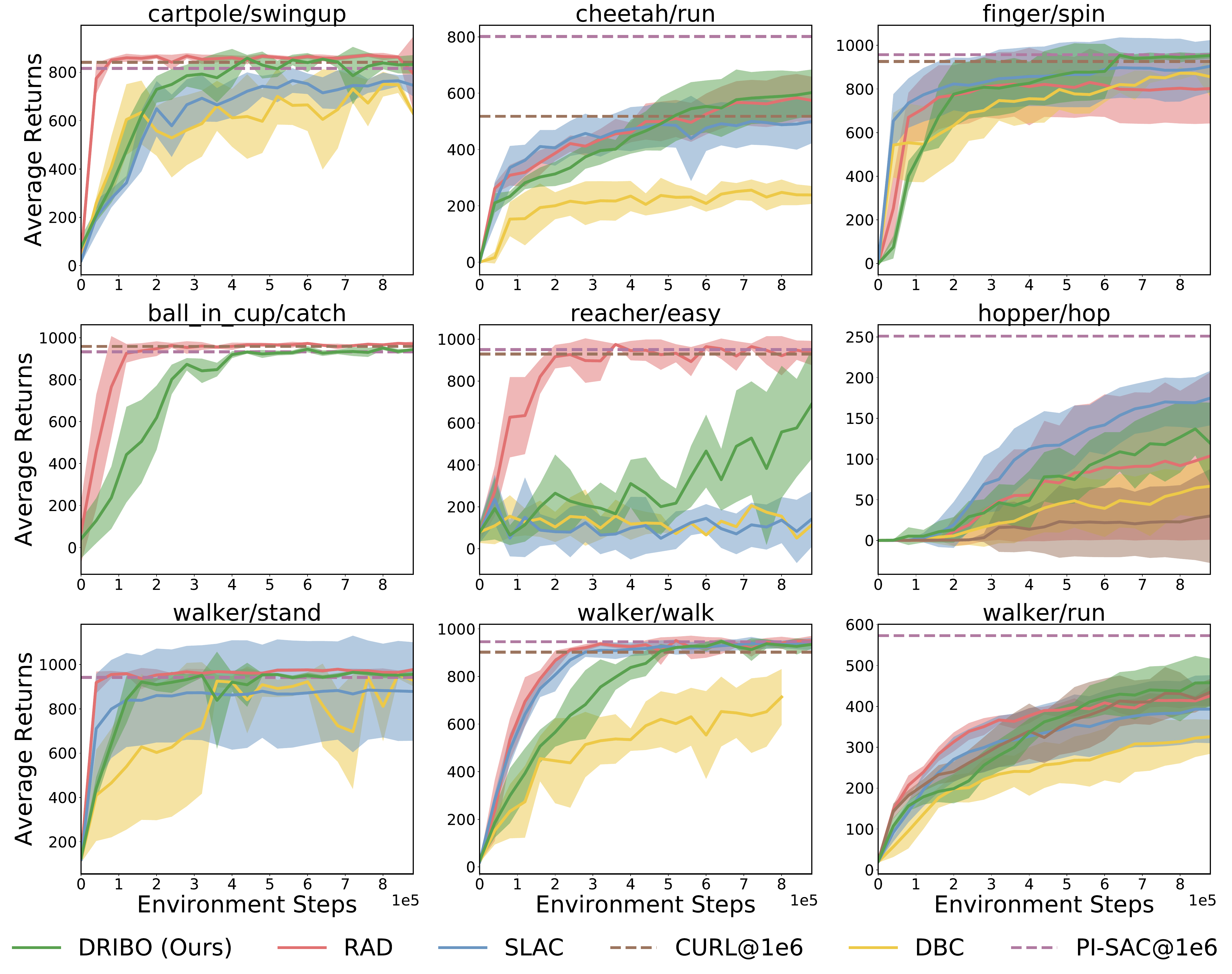}
	}
    \caption{Average returns on DMC tasks over 5 seeds with mean and one standard error shaded in the clean setting.}
    \label{fig:full-clean-setting}
\end{figure}
\begin{figure}[!hpt]
    \centering
    \subfloat{	\includegraphics[width=.7\textwidth]{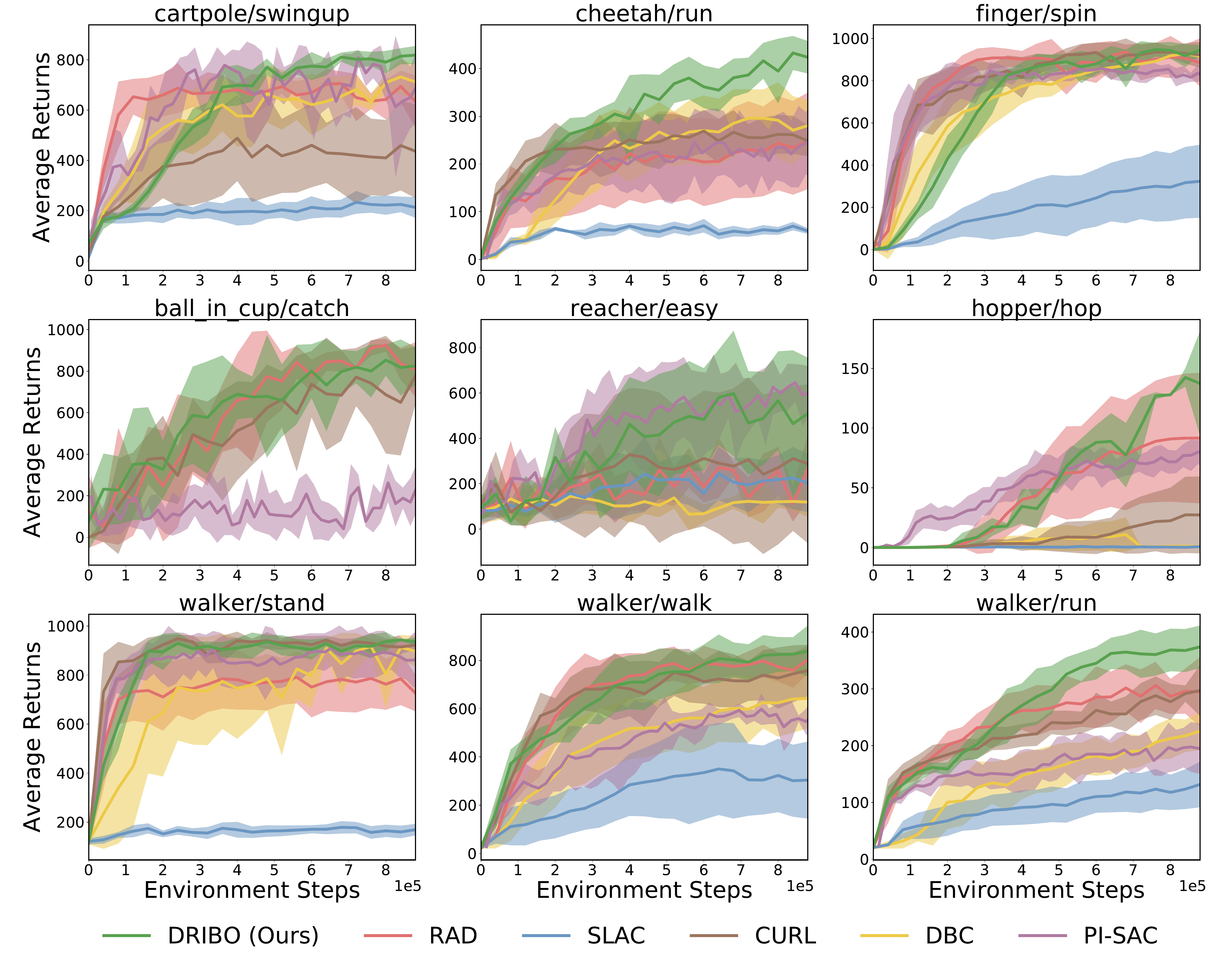}
	}
    \caption{Average returns on DMC tasks over 5 seeds with mean and one standard error shaded in the natural video setting.}
    \label{fig:full-natural-setting}
\end{figure}

\newpage
\subsection{\methodname Loss vs. InfoMax}
\label{appendix:dribo_no_skl}

\subsubsection{Clean Setting}

We provide additional results on comparing DRIBO with DRIBO-no-skl under the clean setting~\Figref{fig:clean_skl_ablation} and~\Figref{fig:clean_skl_value}. It can be observed that DRIBO and DRIBO-no-skl perform similarly in terms of average returns. Recall our earlier plot on comparing DRIBO with DRIBO-no-skl under the natural video setting~\Figref{fig:ablations} which shows DRIBO substantially outperforms DRIBO-no-skl (in other words, the performance of DRIBO-no-skl drops when the setting is changed from clean to natural video). Similar results can be seen in~\Figref{fig:full-clean-setting} and~\Figref{fig:full-natural-setting}, where the performance of approaches like RAD, SLAC, CURL and PI-SAC significantly degrades when the backgrounds of the environments are changed to different natural videos during training and testing.

\begin{figure}[!hpt]
    \centering
    \includegraphics[width=.7\columnwidth]{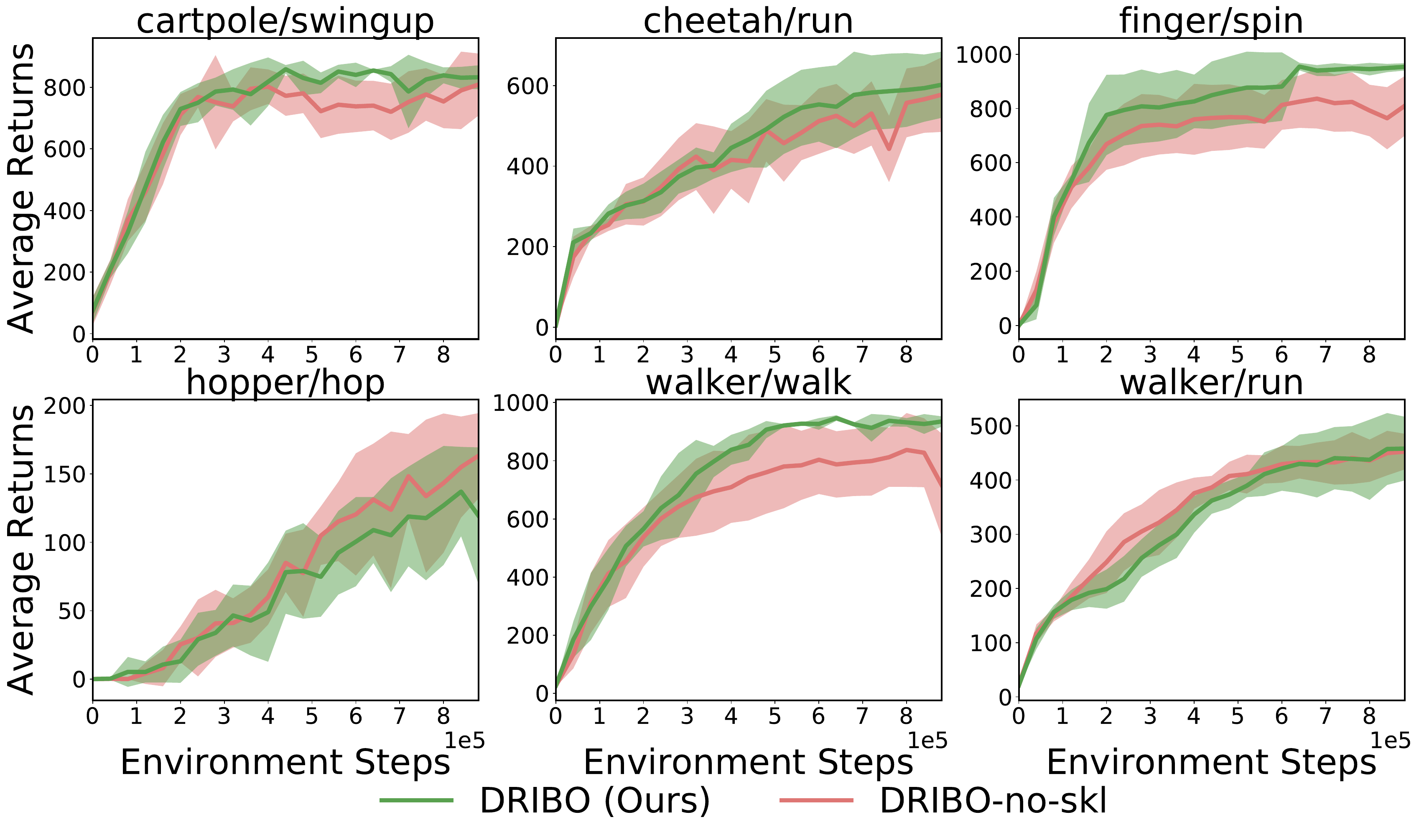}
    \caption{Average returns of \methodname and \methodname-no-skl on DMC tasks over 5 seeds with mean and one standard error shaded in the clean setting.}
    \label{fig:clean_skl_ablation}
\end{figure}

\begin{figure}[!hpt]
    \centering
    \includegraphics[width=.7\columnwidth]{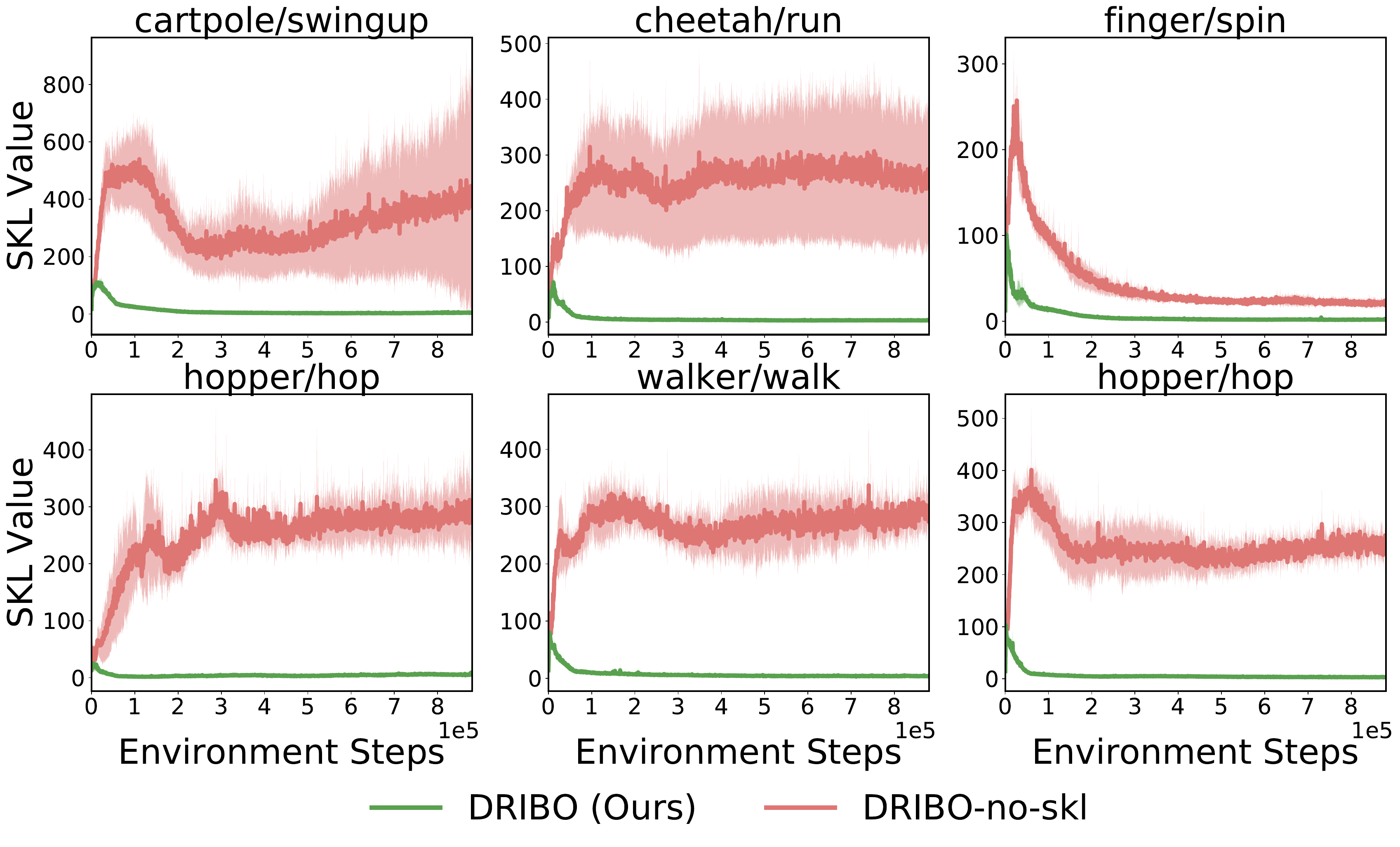}
    \caption{Average SKL values during training in DMC environments in the clean setting.}
    \label{fig:clean_skl_value}
\end{figure}


\subsection{Training and testing performance in DMC under the natural video setting}

\begin{figure}[!hpt]
    \centering
    \includegraphics[width=.7\columnwidth]{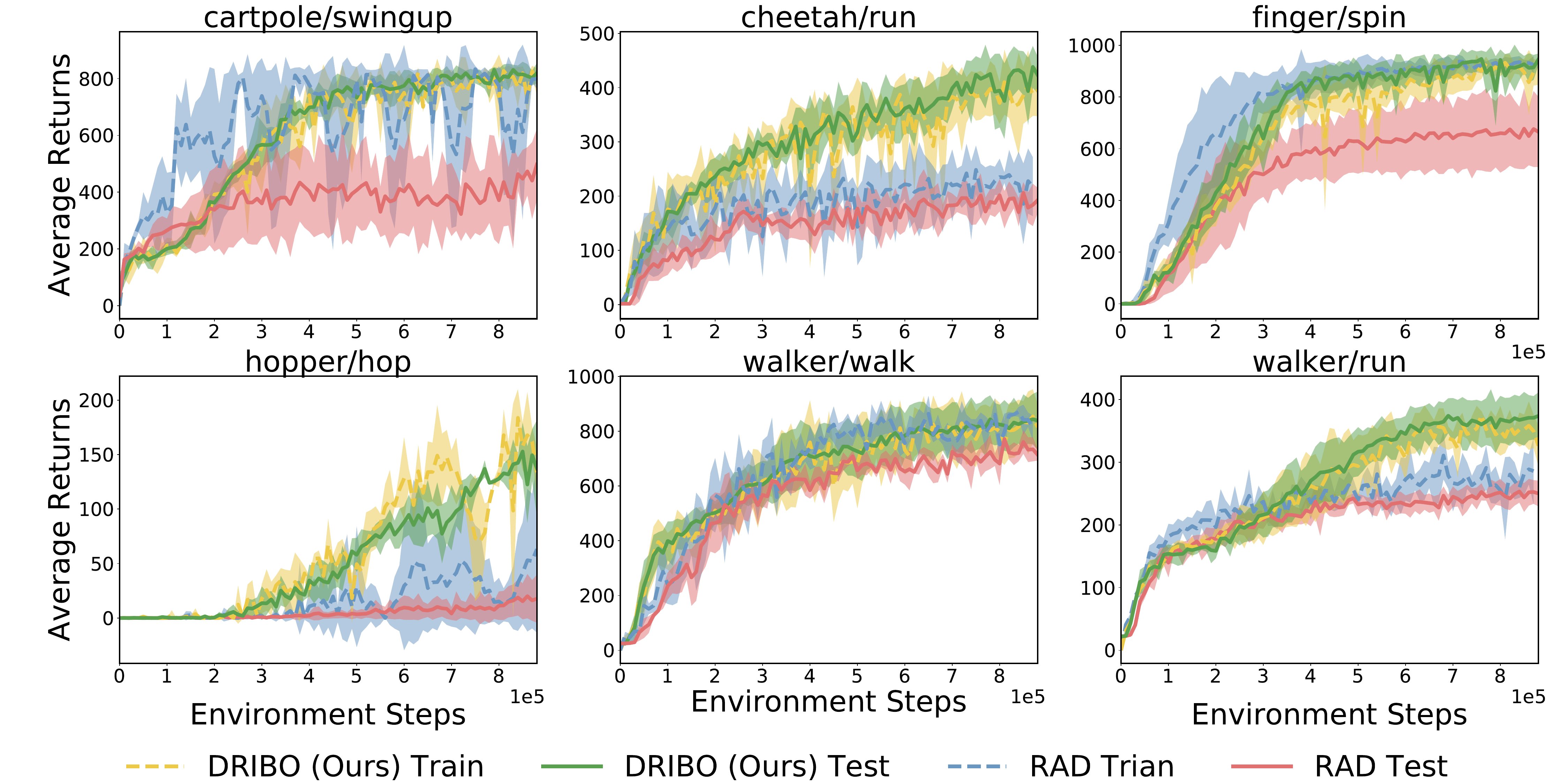}
    \caption{Training and testing performance of RAD and \methodname.}
    \label{fig:rad-generalization}
\end{figure}

\begin{figure}[!hpt]
    \centering
    \includegraphics[width=.7\columnwidth]{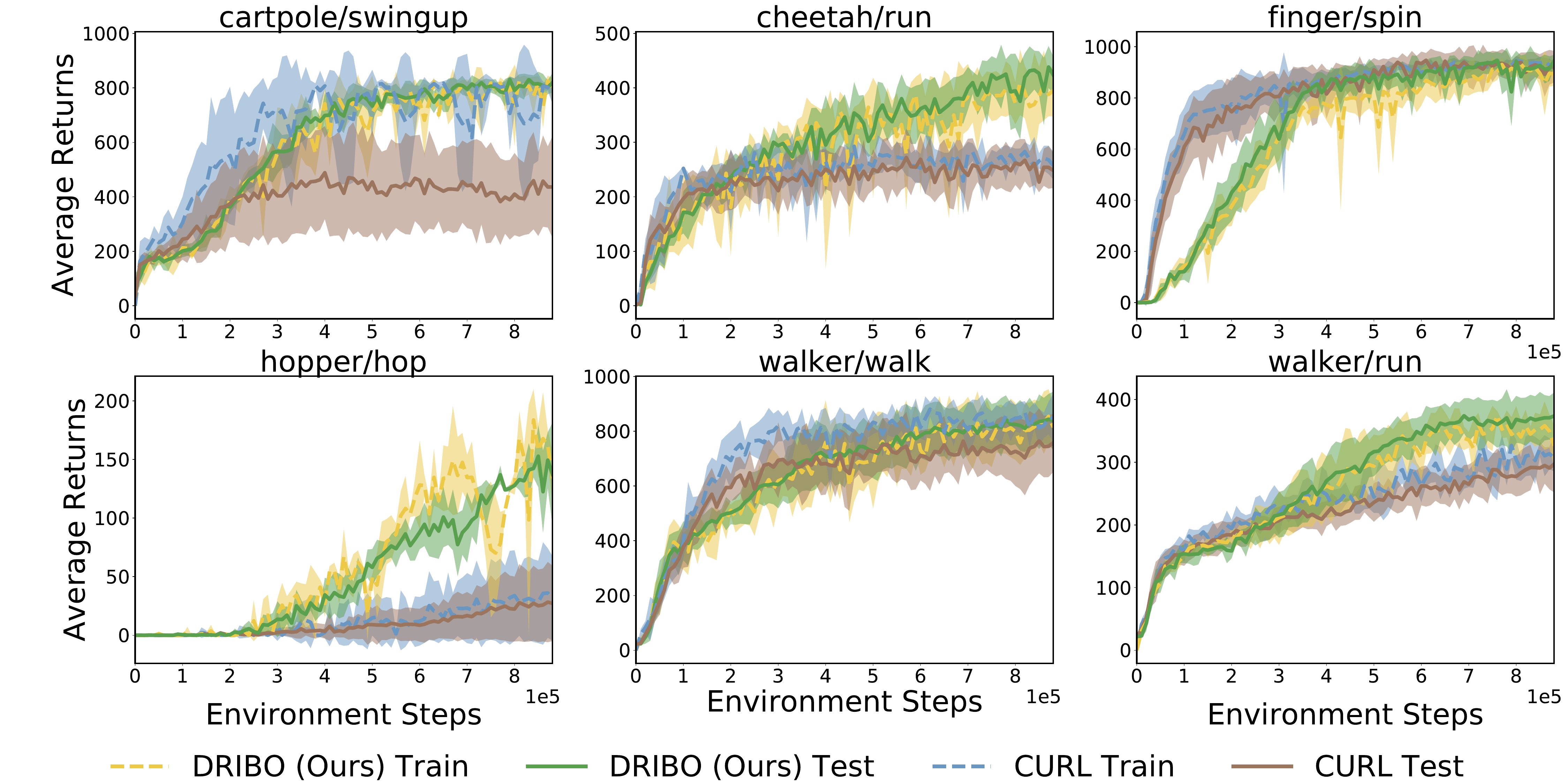}
    \caption{Training and testing performance of CURL and \methodname.}
    \label{fig:curl-generalization}
\end{figure}

\begin{figure}[!hpt]
    \centering
    \includegraphics[width=.7\columnwidth]{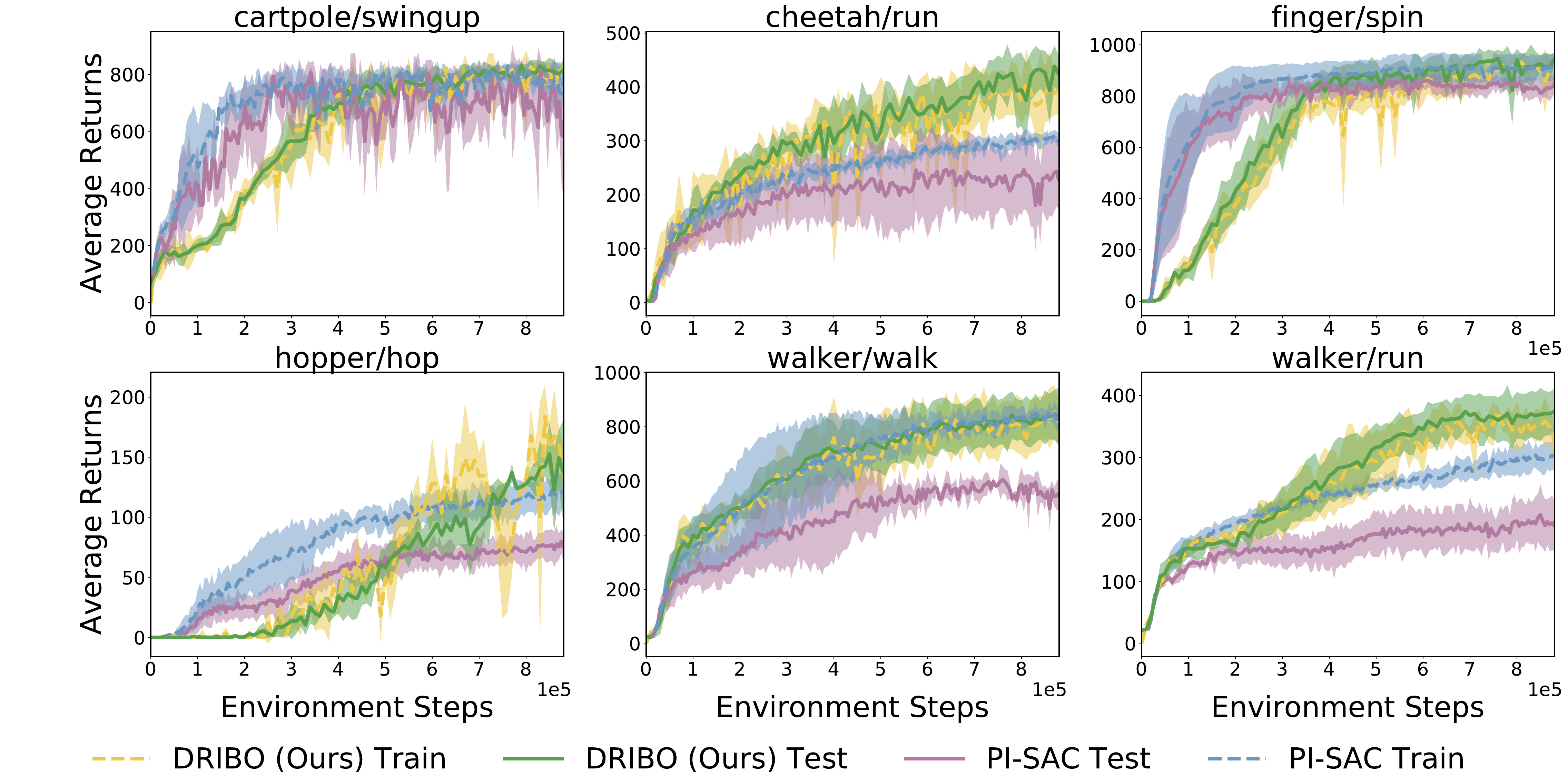}
    \caption{Training and testing performance of PI-SAC and \methodname.}
    \label{fig:pi-sac-generalization}
\end{figure}

We further compare DRIBO with RAD, CURL and PI-SAC on the DMC environments under the natural video setting. We observe that RAD, CURL and PI-SAC could achieve high scores during training but failed to achieve the same high scores (with a substantial gap) during testing.

\subsection{Additional Visualization}
\label{sec:additional_visualization}
\Figref{fig:train_env} are the snapshots of training environments of DMC under the natural video setting. The background videos are randomly sampled from the class of "arranging flower" which are drastically different from backgrounds used during testing (\Figref{fig:sequential_obs}).
\begin{figure}[!ht]
    \centering
    \includegraphics[width=\columnwidth]{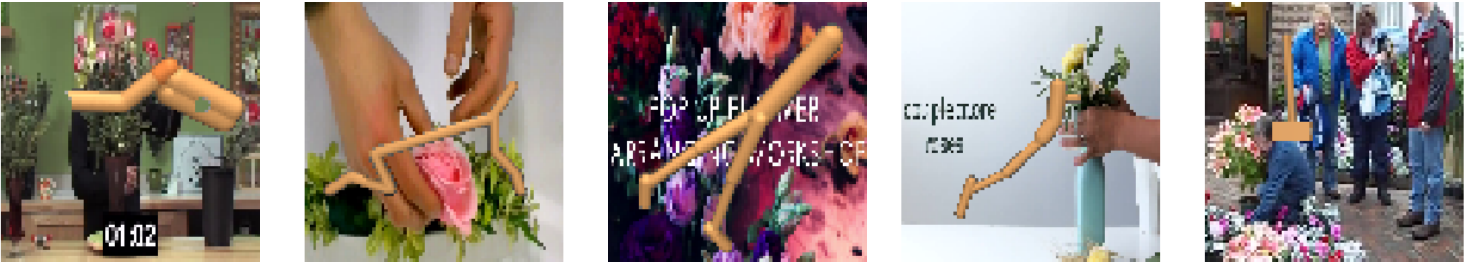}
    \caption{\textbf{Training environments of DMC under the natural video setting:} The background videos are sampled from arranging flower class in  Kinetics dataset.}
    \label{fig:train_env}
\end{figure}
\begin{figure}[!ht]
    \centering
    \includegraphics[width=\columnwidth]{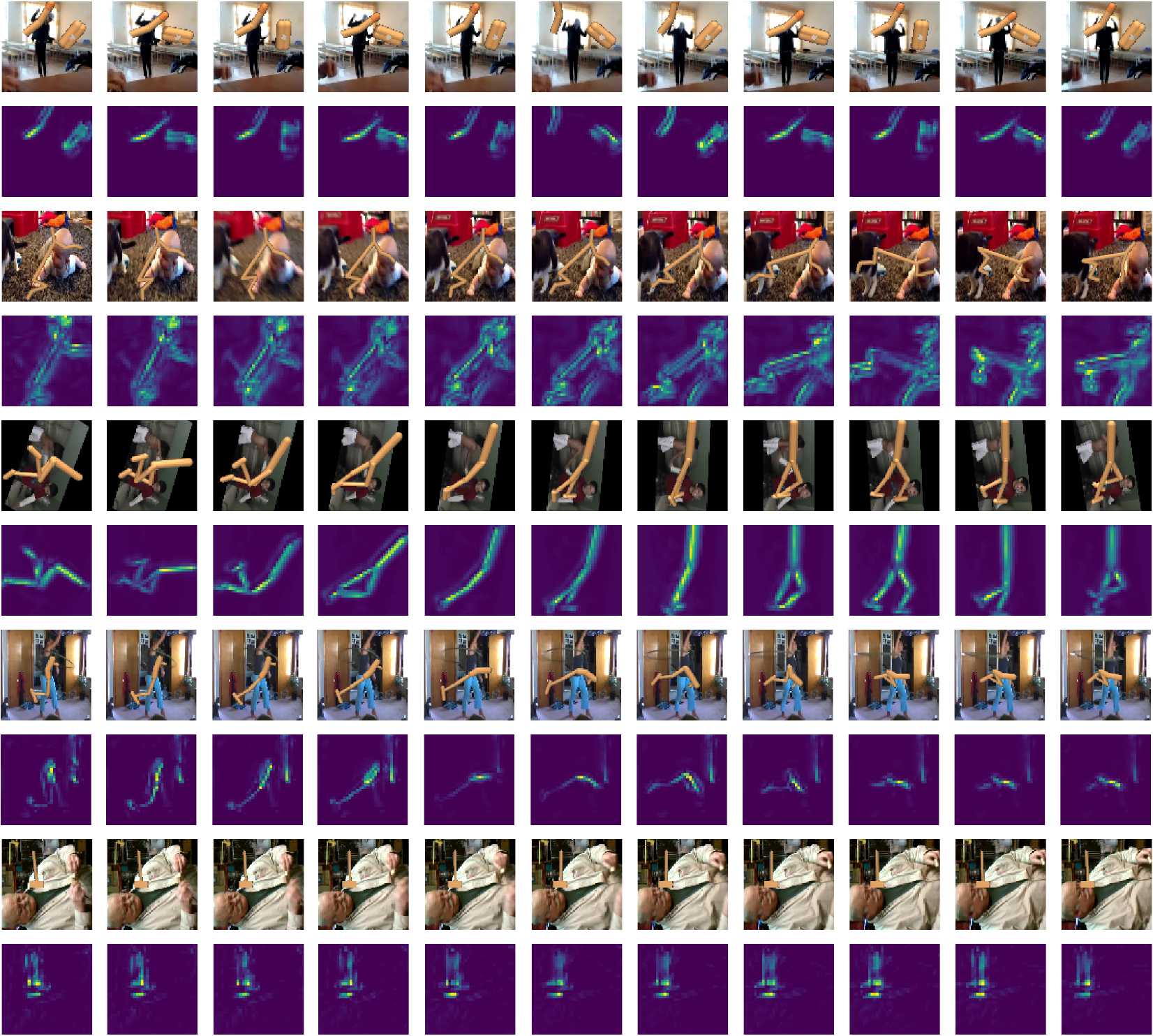}
    \caption{\textbf{Test environments of DMC under the natural video setting:} We show the sequential observations with natural videos as background in DMC and their corresponding spatial attention maps of the \methodname trained encoder.}
    \label{fig:sequential_obs}
\end{figure}

In~\Figref{fig:sequential_obs}, we show sequential observations in test environments of DMC under the natural video setting. The backgrounds videos are randomly sampled from the test data of Kinetic dataset which contain various classes of videos. Note that a single run of the DMC task may have \textit{multiple videos playing sequentially} in the background.

\Figref{fig:sequential_obs} also visualizes the corresponding spatial attention maps of DRIBO trained encoders. For different tasks, \methodname encoders' activations concentrate on entire edges of the body, providing a more complete and robust representation of the visual observations.

\begin{figure}[!hpt]
    \centering
    \includegraphics[width=.8\columnwidth]{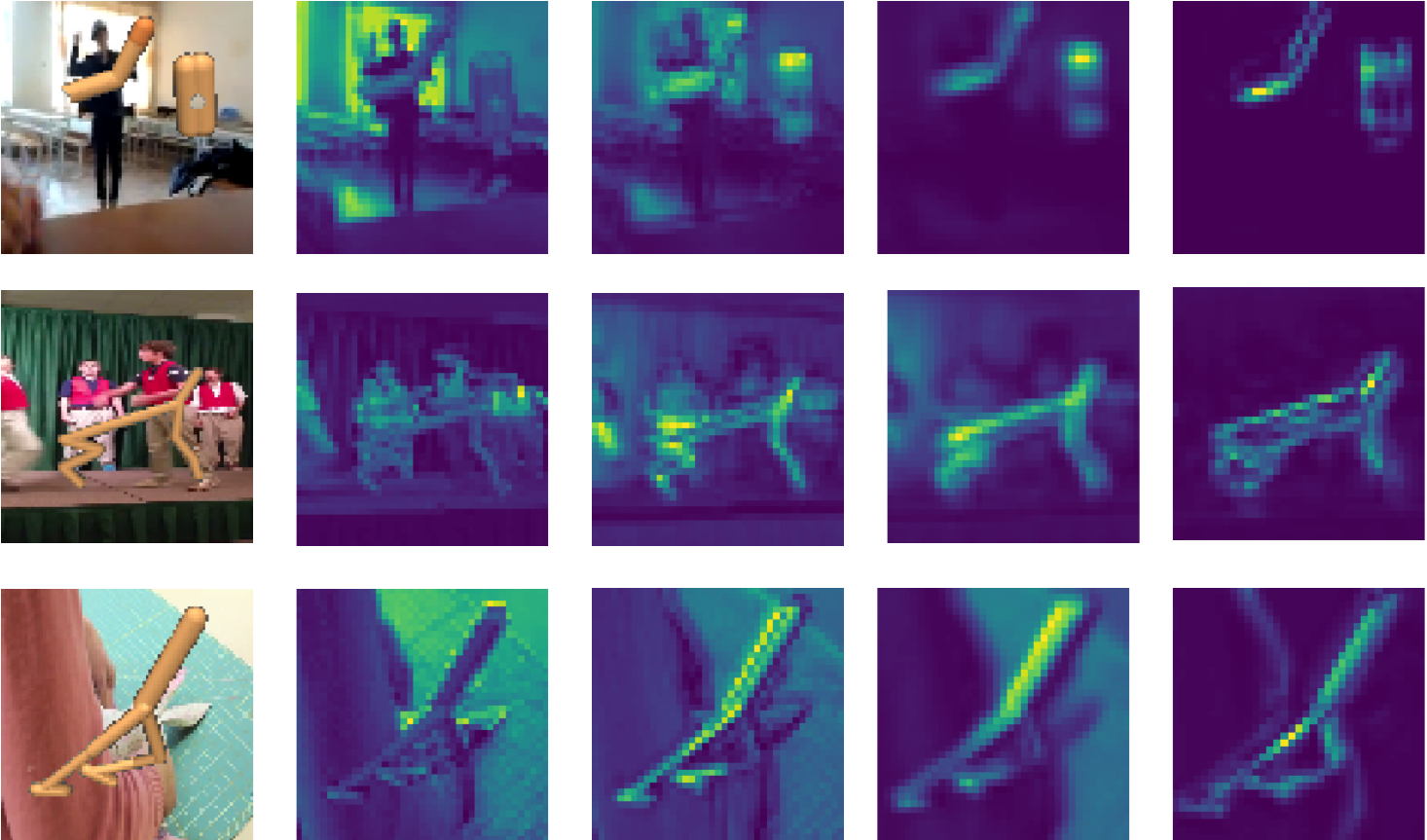}
    \caption{Spatial attention maps for each convolutional layers of the DRIBO trained encoders. The observations are the same as ones in~\Figref{fig:pixel_observations} from the snapshots during testing.}
    \label{fig:spatial_map_each_layer}
\end{figure}

In addition to~\Figref{fig:pixel_observations}, we also show spatial attention maps for each convolutional layers in~\Figref{fig:spatial_map_each_layer}. We can observe that DRIBO trained encoders filter the complex distractors in the backgrounds gradually layer by layer.

To further compare \methodname with frame-stacked CURL, we visualize the learned representations
using t-SNE plots in~\Figref{fig:t-SNE-diff}. We see that even when the backgrounds are drastically different, our encoder learns to map observations with similar robot configurations near each other, whereas CURL's encoder maps similar robot configurations far away from each other. \textit{This shows that CURL does not discard as many irrelevant features in the background as \methodname does despite leveraging data augmentations and backpropagating RL objectives to the encoder.}
\begin{figure}[!hb]
    \centering
    \includegraphics[width=.8\columnwidth]{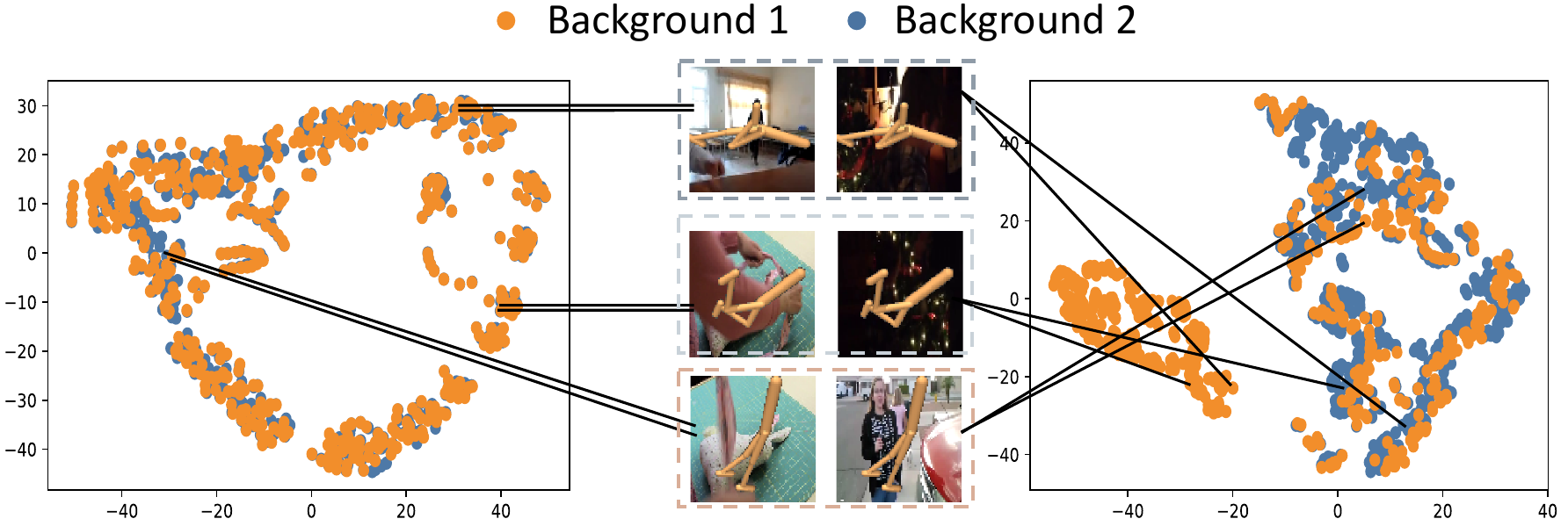}
    \caption{t-SNE of latent spaces learned with \methodname (left) and CURL (right). They are the same as t-SNE in~\Figref{fig:t-SNE}. But we color-code the embedded points corresponding to their backgrounds. The observations are from the same trajectory but with different background natural videos (the same as in Figure~\Figref{fig:t-SNE}). Points from different backgrounds are close to each other in the embedding space learned by \methodname, whereas no such structure is seen in the embedding space learned by CURL.}
    \label{fig:t-SNE-diff}
\end{figure}

\cleardoublepage
\section{Implementation Details}
\label{appendix:implementation}
\subsection{RSSM}
\label{appendix:RSSM}
We split the representation $\vs_t$ into a stochastic part $\vz_t$ and a deterministic part $\vh_t$, where $\vs_t = (\vz_t, \vh_t)$. The generative and inference models of RSSM are defined as:
\begin{align}
    \text{Deterministic state transition: } &\vh_t = f(\vh_{t-1}, \vz_{t-1}, \va_{t-1})
    \nonumber \\
    \text{Stochastic state transition: } &\vz_t \sim p(\vz_t | \vh_t, \vo_t) 
    \nonumber
    \\
    \text{Transition predictor: } &\hat{\vz}_t \sim p(\hat{\vz}_t | \vh_t)
    \nonumber
\end{align}
where $f(\vh_{t-1}, \vz_{t-1}, \va_{t-1})$ is implemented as a recurrent
neural network (RNN) that carries the dependency on the stochastic and deterministic parts at the previous timestep. Then, we obtain the representation with the encoder $p_\theta(\vs_{1:T}| \vo_{1:T}, \va_{1:T}) {=} \prod_t p_\theta(\vs_t | \vo_t, \vh_t)$, where $\vh_t$ retains information from $\vs_{t-1} = (\vz_{t-1}, \vh_{t-1})$ and $\va_{t-1}$. Transition predictor model predicts the prior state $\hat{\vz}_t$ without access to the current observation. It is used to encourage learning an accurate to regularize the posterior learning with KL balancing in~\Algref{algorithm:kl_balancing}.

\subsection{\methodname + SAC}
We first show how we train SAC agent given the representations of \methodname.
Let $\phi(\vo) = \vs \sim p_\theta(\vs | \vo, \vs', \va')$ denote the encoder, where $\vs'$ and $\va'$ as the representation and action at last timestep.
\begin{algorithm}[!hpbt]
    \caption{SAC + \methodname Encoder}
    \begin{algorithmic}[1]
        \INPUT RL batch $\gB_\text{RL}=\{(\phi(\vo_i), a_i, r_i, \phi(\vo_{i}'))\}_{i=1}^{(T-1)*N}$ with $(T-1) * N$ pairs of representation, action, reward and next representation.
        \STATE Get value: $V = \min_{i=1, 2} \hat{Q}_i(\hat{\phi}(\vo')) - \alpha \log \pi(a|\hat{\phi}(o'))$
        \STATE Train critics: $J(Q_i, \phi) = (Q_i(\phi(\vo)) - r - \gamma V)^2$
        \STATE Train actor: $J(\pi) {=} \alpha \log\pi(a|\phi(\vo)) {-} \min_{i=1, 2}Q_i(\phi(\vo))$
        \STATE Train alpha: $J(\alpha) = -\alpha \log \pi(a|\phi(\vo)) {-} \alpha \gH(a|\phi(\vo))$
        \STATE Update target critics: $\hat{Q}_i = \tau_Q Q_i + (1 - \tau_Q) \hat{Q}_i$
        \STATE Update target encoder: $\hat{\phi} \leftarrow \tau_\phi \phi + (1 - \tau_\phi) \phi$
    \end{algorithmic}
    \label{algorithm:SAC_train}
\end{algorithm}

Then we incorporate the above SAC algorithm into minimizing \methodname loss in~\Algref{algorithm:dribo_sac}
\begin{algorithm}[!hptb]
    \caption{\methodname + SAC}
    \begin{algorithmic}[1]
        \INPUT Replay buffer $\gD$ storing sequential observations and actions with length $T$. The batch size is $N$. The number of total training step is K. The number of total episodes is E.
        \FOR{$e=1,\dots,E$}
        \STATE Sample sequential observations and actions from the environment and append new samples to $\gD$.
        \FOR{each step $k=1, \dots, K$}
        \STATE Sample a sequential batch $\gB \sim \gD$.
        \STATE Compute the representations batch $\gB_\text{RL}$ which has the shape $(T, N)$ using the encoder $ p_\theta (\vs_{1:T} | \vo_{1:T}, \va_{1:T})$
        \STATE Train SAC agent: $\E_{\gB_\text{RL}}[J(\pi, Q, \phi)]$ \COMMENT{\Algref{algorithm:SAC_train}}
        \STATE Update $\theta$ and $\psi$ to minimize $\gL_{\methodname}$ using $\gB$ \COMMENT{\Algref{algorithm:DRMIB loss}}.
        \ENDFOR
        \ENDFOR
    \end{algorithmic}
    \label{algorithm:dribo_sac}
\end{algorithm}

\subsection{\methodname + PPO}
The main difference between SAC and PPO is that PPO is an on-policy RL algorithm while SAC is an off-policy RL algorithm. With the update of the encoder, representations may not be consistent within each training step which breaks the on-policy sampling assumption. To address this issue, instead of obtaining $\vs_t$ propagating from the initial observation of the observation sequence, we store the representations as $\vs_t^{\text{old}}$ while sampling from the on-policy batch. Then, we use $\varphi(\vo) = \vs \sim p_\theta(\vs | \vo, \vs^{\text{old}}, \va')$ to denote the representation from the encoder. Here, $\vs^{\text{old}}$ and $\va'$ are the representation and action at the previous timestep. By treating the encoding process as a part of the policy and value function, the on-policy requirement is satisfied since the new action/value at timestep $t$ depends only on $(\vo_t, \vs_{t-1}^{old}, \va_{t-1})$.
\begin{algorithm}[!hptb]
    \caption{\methodname + PPO}
    \begin{algorithmic}[1]
        \INPUT: Replay buffer $\gD$ and on-policy replay buffer $\gD_\text{PPO}$ storing sequential observations and actions with length $T$. The batch size is $N$. The minibatch size for PPO is M. The number of total episodes is E.
        \FOR{$e=1, \dots, E$}
        \STATE Sample sequential observations and actions from the environment $\{\vo_{1:T}, \va_{1:T}, r_{1:T}, \vs_{1:T}^{\text{old}}\}_{i=1}^{N}$.
        \STATE Append new samples to $\gD$ and update the on-policy replay buffer $\gD_\text{PPO}$.
        \FOR{$j = 1,\dots, M$}
        \STATE $\{(\varphi(\vo_i), a_i, r_i)\}_{i=1}^{\lfloor \frac{T * N}{M} \rfloor} \sim \gD_\text{PPO}$
        \STATE Optimize PPO policy, value function and encoder using each sample $(\varphi(\vo_i), a_i, r_i)$ in the batch.
        \STATE Sample a sequential batch $\gB \sim \gD$.
        \STATE Update $\theta$ and $\psi$ to minimize $\gL_{\methodname}$ using $\gB$ \COMMENT{\Algref{algorithm:DRMIB loss}}.
        \ENDFOR
        \ENDFOR
    \end{algorithmic}
    \label{algorithm:dribo_ppo}
\end{algorithm}

\subsection{DMC}
We use an almost identical encoder architecture as the encoder in the RSSM paper~\citep{hafner2019learning}, with two minor differences.
Firstly, we deploy the encoder architecture in \citet{tassa2018deepmind} as the observation embedder, with two more convolutional layers to the CNN trunk. Secondly, we add layer normalization to process the output of CNN, deterministic output and stochastic output.
Deterministic part of the representation is a 200-dimensional vector. Stochastic part of the representation is a 30-dimensional diagonal Gaussian with predicted mean and standard deviation. Thus, the representation is a 230-dimensional vector. We implement Q-network and policy in SAC as MLPs with two fully connected layers of size 1024 with ReLU activations.
We estimate the mutual information using a bi-linear inner product as the similarity measure~\citep{oord2018representation}.

\textbf{Pixel Preprocessing.}
We construct an observational input as a single frame, where each frame is a RGB
rendering of size $100 \times 100$ from the 0th camera. We then divide each pixel by 255 to scale it down to $[0, 1)$ range. For methods compared in our experiments, an observation inputs contains an 3-stack of consecutive frames.

\textbf{Augmentations of Visual Observations.} For RAD, we use \textit{random crop}+\textit{ grayscale} to generate augmented data. For our approach \methodname and CURL, we use \textit{random crop} to generate multi-view observations. We apply the implementation of RAD to do the augmentation. For \textit{random crop}, it extracts a random patch from the original observation. In DMC, we render $100 \times 100$ pixel observations and crop randomly to $84 \times 84$ pixels. For \textit{random grayscale}, it converts RGB images to grayscale with a probability $p=0.3$.

\textbf{Hyperparameters.} To facilitate the optimization, the hyperparameter $\beta$ in the \methodname loss~\Algref{algorithm:DRMIB loss} is slowly increased during training. $\beta$ value starts from a small value $1e-4$ and increases to $1e-3$ with an exponential scheduler. The same procedure is also used in the MIB paper~\citep{federici2020learning}.
We show other hyperparameters for DMC experiments in Table~\ref{tab:DMC_hyperparams}.

\textbf{KL Balancing.} During training, we also incorporate \textit{KL balancing} from a variant method described in DreamerV2~\citep{hafner2020mastering} to train RSSM.
\textit{KL balancing} encourages learning an accurate prior over increasing posterior entropy, so that the prior better approximates the aggregate posterior.
This help us to avoid regularizing
the representations toward a poorly trained prior. \textit{KL balancing} is orthogonal to our MIB objective (\methodname Loss). Note that DreamerV2 deploy \textit{KL balancing} based on a reconstruction loss. In our case, our results show that \textit{KL balancing} can improve training of RSSM with a contrastive-learning-based or mutual-information-maximization objective.
We implement this technique as shown in~\Algref{algorithm:kl_balancing}. $\beta$ shares the same value as the coefficient for SKL term in \methodname Loss.

\begin{algorithm}[!ht]
    \caption{\methodname Loss + KL Balancing}
    \begin{algorithmic}[1]
    \STATE Compute kl balancing term:
    \begin{align*} \text{kl\_balancing} &= 0.8 \cdot \text{compute\_kl}(\text{stop\_grad}(\text{approx\_posterior}) + \text{prior}) 
    \\ &+0.2 \cdot \text{compute\_kl}(\text{approx\_posterior} + \text{stop\_grad}(\text{prior}))
    \end{align*}
    \STATE \textbf{return} \methodname Loss + $\beta \cdot$ kl\_balancing
    \end{algorithmic}
    \label{algorithm:kl_balancing}
\end{algorithm}

\begin{table}[!htp]
\centering
\caption{Hyperparameters used for DMC experiments.}
\begin{adjustbox}{width=0.5\columnwidth, center}
\begin{tabular}{@{}ll@{}}
\toprule
\textbf{Hyperparameters} & \textbf{Value} \\ \midrule
Observation size & $(100 \times 100)$ \\
Cropped size & $(84 \times 84)$ \\
Replay buffer size & 1000000 \\
Initial steps & 1000 \\
Stacked frames & No \\
Action repeat & 2 finger, spin; walker, walk; \\
 & 8 cartpole swingup\\
 & 4 otherwise \\
Evaluation episodes & 8 \\
Optimizer & Adam \\
Learning rate & encoder learning rate: 1e-5; \\
 & policy/Q network learning rate: 1e-5; \\
 & $\alpha$ learning rate: 1e-4. \\
Batch size & $8 \times 32$, where $T=32$ \\
Target update $\tau$ & Q network: 0.01 \\
& encoder: 0.05 \\
Target update freq & 2 \\
Discount $\gamma$ & .99 \\
Initial temperature & 0.1 \\
Total timesteps & 88e4 \\
$\beta$ scheduler start episode & 10 \\
$\beta$ scheduler end episode & 60 \\ \bottomrule
\end{tabular}
\end{adjustbox}
\label{tab:DMC_hyperparams}
\end{table}

\newpage
\subsection{Procgen}
For Procgen suite, the implementation of \methodname is almost the same as DMC experiments. Better design choice could be found by validation. 
We use the same as the encoder architecture used in DMC experiments, except for the observation embedder, which we use the network from IMPALA paper to take the visual observations. In addition, since the actions in Procgen suite are discrete, we use an action embedder to embed discrete actions into continuous space. The action embedder is implemented as a simple one hidden layer resblock with 64 neurons. It maps a one-hot action vector to a 4-dimensional vector. The policy and value function share one hidden layer with 1024 neurons. The policy uses another fully connected layer to generate a categorical distribution to select the discrete action. The value function uses another fully connected layer to generate the value for an input representation. All activation functions are ReLU activations.

\textbf{Augmentation of Visual Observations.} We select augmentation types based on the best reported augmentation types for each environment. DrAC~\citep{raileanu2020automatic} reported best augmentation types for RAD and DrAC in Table 4 and 5 of the DrAC paper. We list the augmentation types used in \methodname in Table~\ref{tab:procgen_aug} and~\ref{tab:procgen_aug2}.
We use the same settings for each augmentation type as DrAC.
Note that we only performed limited experiments to select the augmentations reported in the tables due to time constraints. So, the tables do not show the best augmentation types in each environment for \methodname.

\begin{table}[!h]
\centering
\caption{Augmentation type used for each game.}
\begin{adjustbox}{width=\columnwidth, center}
\begin{tabular}{@{}c|c|c|c|c|c|c|c|c@{}}
\toprule
Env & BigFish & StarPilot & FruitBot & BossFight & Ninja & Plunder & CaveFlyer & CoinRun \\ \midrule
Augmentation & crop & cutout & cutout & cutout & random-conv & crop & random-conv & random-conv \\ \bottomrule
\end{tabular}
\end{adjustbox}
\label{tab:procgen_aug}
\end{table}

\begin{table}[!h]
\centering
\caption{Augmentation type used for each game.}
\begin{adjustbox}{width=\columnwidth, center}
\begin{tabular}{@{}c|c|c|c|c|c|c|c|c@{}}
\toprule
Env & Jumper & Chaser & Climber & DodgeBall & Heist & Leaper & Maze & Miner \\ \midrule
Augmentation & random-conv & crop & random-conv & cutout & crop & crop & crop & flip \\ \bottomrule
\end{tabular}
\end{adjustbox}
\label{tab:procgen_aug2}
\end{table}

\textbf{Hyperparameters.} We use the same $\beta$ scheduler as the DMC experiments. The starting $\beta$ value is $1e-8$ and the final $\beta$ value is $1e-3$. We show other hyperparameters for Procgen environments in Table~\ref{tab:procgen_hyperparams}.

\begin{table}[!htp]
\centering
\caption{Hyperparameters used for Procgen experiments.}
\begin{adjustbox}{width=0.5\columnwidth, center}
\begin{tabular}{ll}
\hline
\textbf{Hyperparameters} & \textbf{Value} \\ \hline
Observation size & $(64 \times 64)$ \\
Replay buffer size & 1000000 \\
Num. of steps per rollout & 256 \\
Num. of epochs per rollout & 3 \\
Num. of minibatches per epoch & 8 \\
Stacked frames & No \\
Evaluation episodes & 10 \\
Optimizer & Adam \\
Learning rate & encoder learning rate: 1e-4;
\\
 & policy learning rate: 5e-4; \\
 & $\alpha$ learning rate: 1e-4. \\
Batch size & $8 \times 32$, where $T=32$ \\
Entropy bonus & 0.01 \\
PPO clip range & 0.2 \\
Discount $\gamma$ & .99 \\
GAE parameter$\lambda$ & 0.95 \\
Reward normalization & yes \\
Num. of workers & 1 \\
Num. of environments per worker & 64 \\
Total timesteps & 25M \\
$\beta$ scheduler start episode & 10 \\
$\beta$ scheduler end episode & 110 \\ \hline
\end{tabular}
\end{adjustbox}
\label{tab:procgen_hyperparams}
\end{table}

\textbf{Discussion.} Here, we extend the discussion on why our method underperforms on some environments, whose screenshots are shown in \Figref{fig:procgen}.

Our approach, \methodname, only consider global MI~\citep{hjelm2019learning} in the current implementation. As a result, local structures can be easily ignored in the representation. More specifically, representations containing the same static local features within a single execution but at different timesteps are treated as negative examples in the mutual information maximization. Then, the information of these local features shared between representations is not maximized. Negative pairs of representations sharing this type of local features are globally negative pairs but locally positive pairs.

For Plunder, the goal is to destroy moving enemy pirate ships by firing cannonballs. The enemy ships can be identified with the color of the target in the bottom left corner. The background of the game maintains the same within a single execution. Wooden obstacles capable of
blocking the player’s cannonballs.
In a single execution, critical features like the target label and wooden obstacles remain unchanged. Failing to capture these local features results in poor performance of an agent.

For Chaser, the goal is to collect all green orbs as well as stars. A collision with an enemy that is not vulnerable (red) results in the death of the player. The background remains the same across different executions. Walls in the environment are dense and remain static during a single execution. Failing to capture walls' positions in representations hinders the agent from navigating to avoid enemies and collect orbs/stars. In Maze and Leaper, walls also remain static, but the backgrounds are different across different executions. This difference reduces the influence introduced by globally negative pairs but locally positive pairs.
By contrast, walls in DodgeBall remain static but more critical since hitting at a wall ends the game.

\subsection{Compute Resources and License}
\textbf{Compute Resources.} We used a desktop with a 12-core CPU and a single GTX 1080 Ti GPU for benchmarking. Each seed for DMC benchmarks takes 3 days to finish. For Procgen suite, it takes 2 days to finish experiments on each seed.

\textbf{License.} DMC benchmarks are simulated in MuJoCo 2.0. We perform the experiments in this paper under an Academic Lab License. We perform experiments in Procgen suite with its open source code under MIT License.

\begin{figure}[!ht]
    \centering
    \includegraphics[width=\columnwidth]{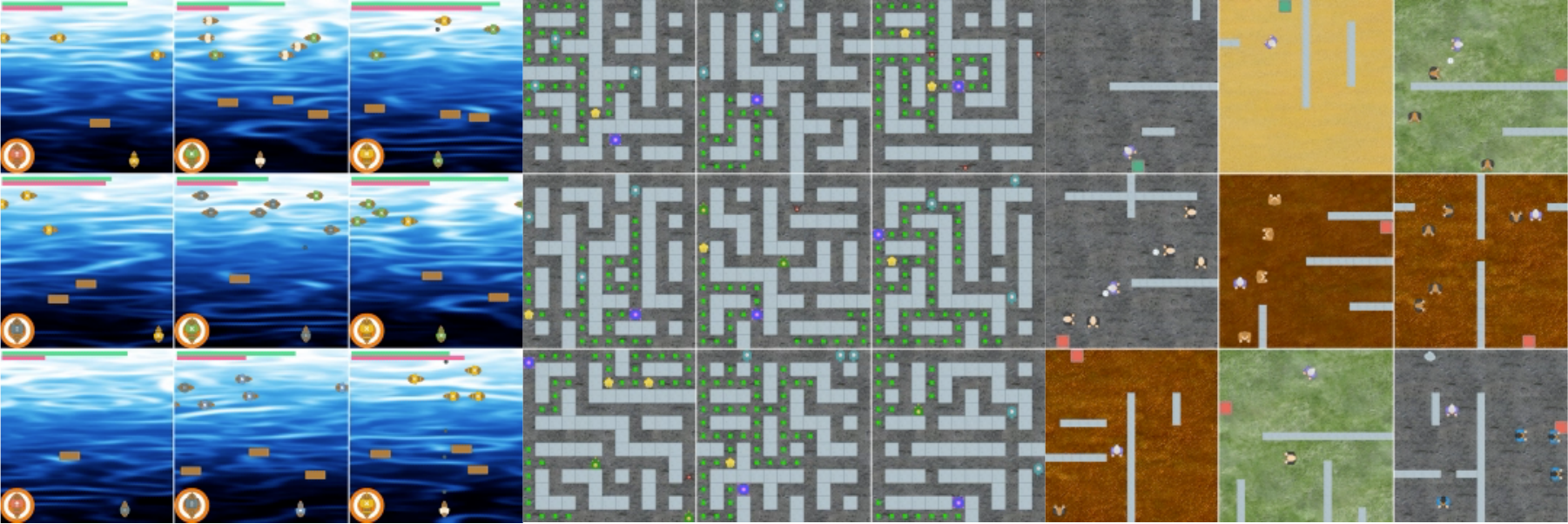}
    \caption{Screenshots of the three Procgen games where our approach \methodname does not improve generalization performance compared to the other methods. From left to right, they are Plunder, Chaser and DodgeBall.}
    \label{fig:procgen}
\end{figure}
\label{sec:appendix}
\end{document}